\documentclass[runningheads]{llncs}
\usepackage[english]{babel}
\usepackage{comment}

\usepackage{booktabs} % For pretty tables
\usepackage{subcaption} % For sub-figures
\usepackage{graphicx}
\usepackage{amsfonts}
\usepackage{amsmath}
\usepackage[left]{lineno} % 'left' aligns numbers on the left
\usepackage{xcolor}
\usepackage{algorithm}
\usepackage{algorithmic}
\usepackage{multirow}
\usepackage[normalem]{ulem}
\usepackage{amssymb}

\usepackage{tikz}
\usetikzlibrary{arrows.meta,positioning,fit,backgrounds,calc,shapes.geometric}

\newcommand{\vect}[1]{\mathbf{#1}}

\begin{document}
\title{Instance-Adaptive Parametrization for Amortized Variational Inference}
\titlerunning{Instance-Adaptive Parametrization for Amortized Variational Inference}
%
% If the paper title is too long for the running head, you can set
% an abbreviated paper title here
%
\author{
Andrea Pollastro\inst{1} \and
Andrea Apicella\inst{2} \and
Francesco Isgrò$^{\dag}$\inst{1} \and
Roberto Prevete$^{\dag}$\inst{1}
}
\authorrunning{A. Pollastro, A. Apicella, F. Isgrò, and R. Prevete}
%
% First names are abbreviated in the running head.
% If there are more than two authors, 'et al.' is used.
%
\institute{
Department of Electrical Engineering and Information Technology, University of Naples Federico II, 80125, Naples, Italy
\and
Department of Information Engineering, Electrical Engineering, and Applied Mathematics, University of Salerno, 84084, Fisciano (Salerno), Italy
}
\maketitle

\begingroup
\renewcommand\thefootnote{\dag}
\footnotetext{These authors share senior authorship.}
\endgroup

% typeset the header of the contribution
%
% \linenumbers % Activate line numbering

\begin{abstract}
Variational autoencoders (VAEs) rely on amortized variational inference to enable efficient posterior approximation, but this efficiency comes at the cost of a shared parametrization, giving rise to the amortization gap.
We propose the \emph{instance-adaptive variational autoencoder} (IA-VAE), an amortized inference framework in which a hypernetwork generates input-dependent modulations of a shared encoder. This enables input-specific adaptation of the inference model while preserving the efficiency of a single forward pass.
From a theoretical perspective, we show that the variational family induced by IA-VAE contains that of standard amortized inference, implying that IA-VAE cannot yield a worse optimal ELBO.
By leveraging instance-specific parameter modulations, the proposed approach can achieve performance comparable to standard encoders with substantially fewer parameters, indicating a more efficient use of model capacity.
Experiments on synthetic data, where the true posterior is known, show that IA-VAE yields more accurate posterior approximations and reduces the amortization gap.
Similarly, on standard image benchmarks, IA-VAE consistently improves held-out ELBO over baseline VAEs, with statistically significant gains across multiple runs.
These results suggest that increasing the flexibility of the inference parametrization through instance-adaptive modulation is an effective strategy for mitigating amortization-induced suboptimality in deep generative models.

\keywords{Variational Autoencoders \and Hypernetworks \and Amortized Variational Inference \and Amortization Gap}
\end{abstract}
\section{Introduction}
\label{sec:introduction}
% \roberto{In this work, we propose a \textit{hypernetwork}-based approach that goes beyond standard \textit{amortized inference} by conditioning a shared inference model on each input, enabling more flexible and expressive predictions across diverse datapoints. To motivate our approach, we briefly recall some standard formulations of \textit{latent} variable models.} 
Amortized variational inference~\cite{graves2011practical,kingma2013auto} enables scalable posterior approximation through shared inference networks and is a key component of modern deep generative modeling. 
However, this efficiency comes at a cost: a single global mapping constrains the ability to recover input-specific optimal variational parameters, giving rise to the amortization gap.
We propose a hypernetwork-based approach that relaxes this limitation via data-dependent parameter modulation of a shared inference model.
% We first briefly recall the latent variable model setting in which amortized inference is typically applied.

Many probabilistic models introduce latent variables to capture unobserved structure underlying observed data~\cite{blei2017variational}. In such settings, exact posterior inference is typically intractable, making approximate inference necessary.

Variational inference provides a general framework for approximating intractable posteriors by optimizing over a parameterized family of distributions~\cite{blei2017variational}, casting posterior approximation as an optimization problem and enabling scalable and efficient inference in complex models.

In recent years, variational inference has been increasingly combined with function approximators such as artificial neural networks (ANNs), leading to the development of amortized variational inference~\cite{graves2011practical,kingma2013auto}. 
In this context, a shared inference network maps each observation to the parameters of its approximate posterior, in contrast to non-amortized approaches that require instance-specific optimization. 
This enables efficient inference across large datasets, as posterior parameters are obtained via a single forward pass.
Among these methods, variational autoencoders (VAEs)~\cite{kingma2013auto,rezende2014stochastic} constitute a prominent instance of amortized variational inference in latent variable models. 
A graphical representation is shown in Figure~\ref{fig:amortized_vs_nonamortized}.

\begin{figure}[t]
\centering
\begin{tikzpicture}[
    font=\small,
    >=Latex,
    box/.style={draw, rounded corners, minimum width=2cm, minimum height=0.5cm, align=center, font=\scriptsize},
    smallbox/.style={draw, rounded corners, minimum width=1.7cm, minimum height=0.8cm, align=center, font=\scriptsize},
    % data/.style={draw, circle, minimum size=0.75cm, align=center},
    data/.style={draw, rounded corners, minimum size=0.75cm, minimum height=0.8cm, align=center, font=\scriptsize},
    title/.style={font=\bfseries},
    note/.style={font=\scriptsize, align=center}
]

% =========================
% LEFT PANEL: NON-AMORTIZED
% =========================
\node[title] at (-4.1,3.8) {Non-amortized inference};

% Inputs
\node[data, fill=green!5] (x1l) at (-6.5,2.5) {Data\\1};
\node[data, fill=green!5] (x2l) at (-6.5,1.0) {Data\\2};
\node[data, fill=green!5] (x3l) at (-6.5,-0.5) {Data\\3};

% Optimization blocks
% \node[box, fill=gray!5] (opt1) at (-4.6,2.5) {Optimization};
% \node[box, fill=gray!5] (opt2) at (-4.6,1.0) {Optimization};
% \node[box, fill=gray!5] (opt3) at (-4.6,-0.5) {Optimization};

% Outputs
\node[smallbox, fill=orange!10] (q1l) at (-2.3,2.5) {Variational\\Parameters};
\node[smallbox, fill=orange!10] (q2l) at (-2.3,1.0) {Variational\\Parameters};
\node[smallbox, fill=orange!10] (q3l) at (-2.3,-0.5) {Variational\\Parameters};

% Arrows
% \draw[->] (x1l) -- (opt1);
% \draw[->] (x2l) -- (opt2);
% \draw[->] (x3l) -- (opt3);

% \draw[->] (opt1) -- (q1l);
% \draw[->] (opt2) -- (q2l);
% \draw[->] (opt3) -- (q3l);

\draw[->] (x1l) -- (q1l) node[midway, above, font=\scriptsize] {Optimization};
\draw[->] (x2l) -- (q2l) node[midway, above, font=\scriptsize] {Optimization};
\draw[->] (x3l) -- (q3l) node[midway, above, font=\scriptsize] {Optimization};

\node[note] at (-4.2,-1.8) {Separate optimization\\for each datapoint};

% ======================
% RIGHT PANEL: AMORTIZED
% ======================
\node[title] at (2.2,3.8) {Amortized inference};

% Inputs
\node[data, fill=green!5] (x1r) at (.2,2.5) {Data\\1};
\node[data, fill=green!5] (x2r) at (.2,1.0) {Data\\2};
\node[data, fill=green!5] (x3r) at (.2,-0.5) {Data\\3};

% Shared network
\node[box, minimum width=1.4cm, minimum height=3.8cm, fill=blue!5] (enc) at (1.8,1.0) {Shared\\Inference\\Model};

% Outputs
\node[smallbox, fill=orange!10] (q1r) at (3.8,2.5) {Variational\\Parameters};
\node[smallbox, fill=orange!10] (q2r) at (3.8,1.0) {Variational\\Parameters};
\node[smallbox, fill=orange!10] (q3r) at (3.8,-0.5) {Variational\\Parameters};

% Arrows
\draw[->] (x1r) -- (enc.west|-x1r);
\draw[->] (x2r) -- (enc.west);
\draw[->] (x3r) -- (enc.west|-x3r);

\draw[->] (enc.east|-x1r) -- (q1r);
\draw[->] (enc.east) -- (q2r);
\draw[->] (enc.east|-x3r) -- (q3r);

\node[note] at (2.2,-1.8) {Single shared model predicts\\variational parameters directly};

\end{tikzpicture}
\caption{Comparison between non-amortized and amortized variational inference. 
In non-amortized inference (left), the variational parameters are optimized independently for each datapoint. 
In amortized inference (right), a shared inference model directly maps each observation to the parameters of its approximate posterior.}
\label{fig:amortized_vs_nonamortized}
\end{figure}
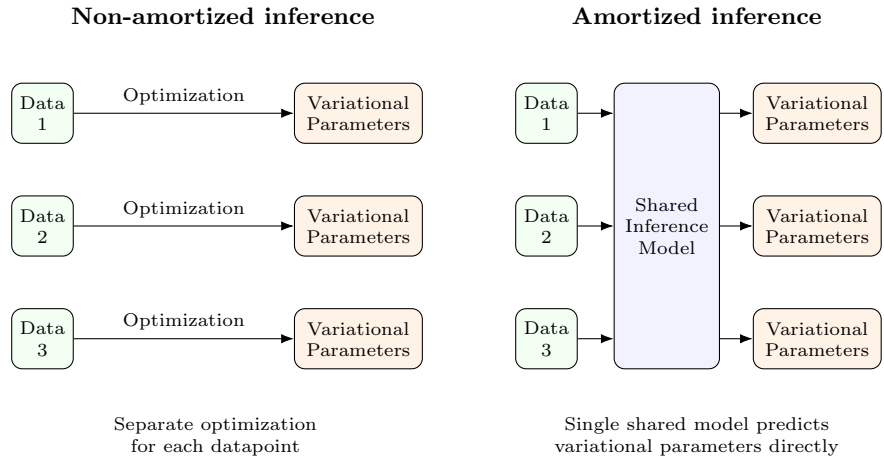

By coupling a parameterized inference network with a flexible generative model, VAEs enable scalable learning and approximate posterior inference in high-dimensional settings. 
%Owing to their computational efficiency and representational flexibility, VAEs have become a central framework in deep generative modeling and continue to be widely adopted across a broad range of application domains, including anomaly detection~\cite{pollastro2025sincvae,pollastro2023semi,kapsecker2025disentangled}, medical data analysis~\cite{sadria2025scvaeder,yue2026precision,ruiperez2026reducing,al2026multi}, synthetic data generation and augmentation~\cite{zhao2025ai,shang2026data,fathnejat2026augmentation}, and computational physics~\cite{nerin2025parameter,ren2026expanding}.
Owing to their computational efficiency and representational flexibility, VAEs have become a central framework in deep generative modeling and are widely applied across a broad range of domains \cite{pollastro2025sincvae,pollastro2023semi,kapsecker2025disentangled,sadria2025scvaeder,yue2026precision,ruiperez2026reducing,al2026multi,zhao2025ai,shang2026data,fathnejat2026augmentation,nerin2025parameter,ren2026expanding}.

%However, while amortization significantly improves computational scalability, it also introduces additional constraints on the variational approximation. Since the variational parameters are constrained to the image of a global inference function, the optimal instance-specific variational solution may not be obtainable. 
% \apicella{non mi è chiaro il precedente periodo} \pollastro{sopra diciamo che fai normalmente ottimizzazione per dato; qui se ammortizzi con una funzione ne guadagni di velocita', ma quello che ammortizzi e' vincolato dal codominio della funzione che usi, e quindi ne perdi. Quale parte non ti e' chiara?}
%More generally, the inability of inference networks to recover the input-specific optimal variational parameters introduces a structural source of suboptimality that can negatively affect both posterior accuracy and generative performance.
%This systematic discrepancy between amortized and instance-specific optimal variational solutions is commonly referred to as the \emph{amortization gap}~\cite{cremer2018inference,marino2018iterative,ganguly2023amortized}.
%A distinct source of suboptimality, known as the \emph{approximation gap}~\cite{cremer2018inference}, arises from the mismatch between the true posterior and the chosen variational family; however, addressing this aspect falls outside the scope of the present work. 

However, while amortization significantly improves computational scalability, it also introduces additional constraints on the variational approximation. Since the variational parameters are restricted to the image of a global inference function, the optimal instance-specific solution may not be attainable.
This limitation is commonly referred to as the \emph{amortization gap}~\cite{cremer2018inference,marino2018iterative,ganguly2023amortized}, and can negatively affect both posterior accuracy and generative performance. A distinct source of suboptimality, known as the \emph{approximation gap}~\cite{cremer2018inference}, arises from the mismatch between the true posterior and the chosen variational family, but is not the focus of this work.
A graphical representation of the inference error in VAEs is shown in Figure~\ref{fig:amortization_gap_diagram}.
% \apicella{la figura 2 si riferisce solo al primo punto o ad entrambi i punti? se solo al primo, la richiamerei prima. Inoltre} SI RIFERISCE AD ENTRAMBI, E' EVIDENTE NELLA FIGURE DOVE CI SONO SIA AMORTIZ CHE APPROX
\begin{figure}[t]
    \centering
    \begin{tikzpicture}[
        node distance=1.2cm,
        font=\sffamily\small,
        >=Stealth
    ]
        \draw[fill=gray!3, draw=gray!60, dashed, rounded corners=12pt] (-1, -0.3) rectangle (9.5, 5.9);
        \node[anchor=north east, gray] at (9.3, 5.7) {Distribution Space};
    
        \draw[thick, fill=blue!2, draw=blue!15] (5, 1.8) ellipse (4.2cm and 1.2cm);
        \node[blue!60!black] at (7.7, 0.2) {Variational Family};
    
        \node[circle, fill=red, inner sep=2pt] (P) at (1.5, 5) {};
        \node[above=2pt of P, align=center] {True Posterior};
        
        \node[circle, fill=blue!80!black, inner sep=2pt] (Qstar) at (3.5, 2.5) {};
        \node[below left=4pt and -15pt of Qstar, align=center, blue!80!black, font=\footnotesize] {Optimal\\Approximation};
        
        \node[circle, fill=orange!90!black, inner sep=2pt] (Qphi) at (7.5, 2.0) {};
        \node[below=4pt and -2cm of Qphi, orange!90!black] {Amortized Solution};
    
        \draw[<->, thick, red!70!black, dashed] (P) -- (Qstar) 
            node[midway, left=6pt, align=right, font=\footnotesize] {Approximation\\Gap};
    
        \draw[<->, thick, orange!90!black] (Qstar) -- (Qphi) 
            node[midway, above, sloped, font=\footnotesize\itshape] {Amortization Gap};
    
        \draw[<->, thin, gray!90, dotted] (P) to [bend left=10] (Qphi);
        \node[gray, font=\footnotesize] at (5.3, 4.2) {Total Gap};
    
    \end{tikzpicture}
    \caption{Schematic decomposition of the inference error.
    The approximation gap arises from the limited expressiveness of the variational family, which may not contain the true posterior.
    The amortization gap captures the additional suboptimality introduced by the inference model, which does not recover the optimal variational parameters within that family.}
    \label{fig:amortization_gap_diagram}
\end{figure}
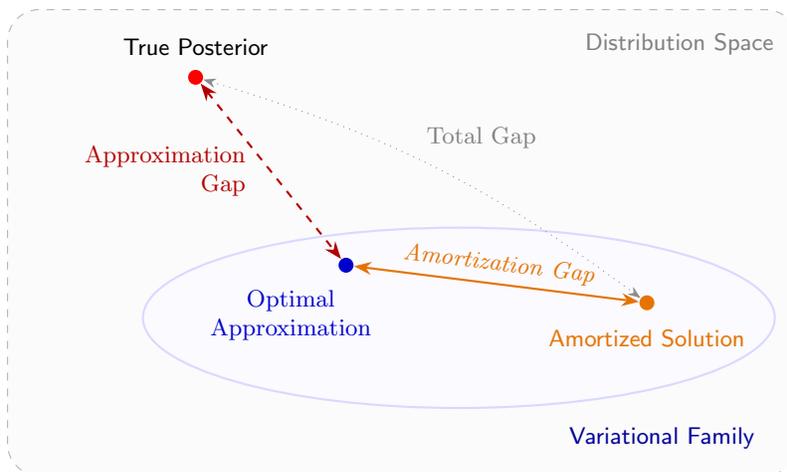

Many works address the amortization gap in VAEs by introducing iterative refinement of variational parameters, combining amortized initialization with input-specific updates~\cite{hoffman2013stochastic,marino2018iterative,kim2018semi}. While effective, these methods increase computational cost and reintroduce optimization for each input data. This raises the question of whether more flexible parameterization can instead be achieved directly at the level of the inference model parameters, avoiding explicit iterative refinement.

The idea of generating model parameters in a data-dependent manner has a substantial history in the literature. 
Early work on fast weights~\cite{schmidhuber1992learning} explored mechanisms in which rapidly changing parameters are generated or updated based on the current input or hidden state, enabling context-dependent adaptation.
Related ideas were later developed in programmable neural networks~\cite{donnarumma2012programming}, proposed by one of the authors of this paper, where a fixed-weight network is augmented with auxiliary inputs that encode a \emph{program} and induce different effective behaviors. In this formulation, a single architecture can emulate a family of networks depending on externally provided control signals.
These ideas were later formalized and popularized under the framework of hypernetworks~\cite{ha2016hypernetworks}, where a neural network generates the parameters of a target network in a input-specific manner. 

In this work, we propose a hypernetwork-based parametrization of amortized variational inference, in which a hypernetwork generates input-specific parameter modulations of a shared base inference model.
We seek to relax the structural constraint imposed by a globally shared inference parametrization within the VAE framework.
Rather than adopting iterative optimization or modeling uncertainty over global weights, we investigate whether instance-specific adaptation can be achieved directly at the parameter level while preserving end-to-end differentiability and computational efficiency.
Concretely, a shared base inference model provides a global parametrization, while a hypernetwork conditioned on each input observation produces modulations of its parameters.

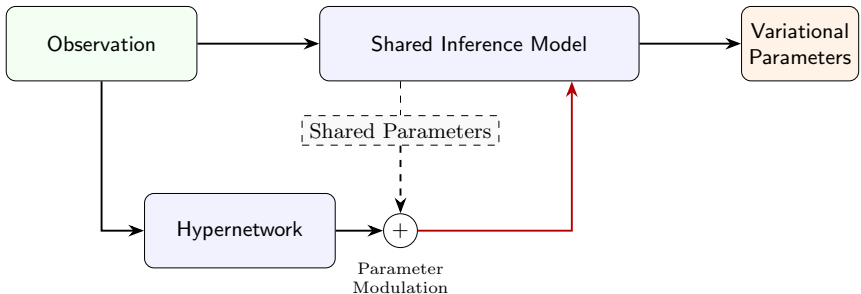
\begin{figure}[h]
    \centering
    \scalebox{0.9}{
        \begin{tikzpicture}[
            node distance=1.5cm and 1.2cm,
            block/.style={rectangle, draw, fill=blue!5, text width=2.6cm, align=center, minimum height=1.1cm, rounded corners, font=\small\sffamily},
            wideblock/.style={block, text width=4.5cm}, 
            dist/.style={rounded corners, draw, fill=orange!10, minimum size=1.1cm, font=\small\sffamily, align=center},
            param/.style={rectangle, draw, dashed, fill=gray!5, font=\footnotesize},
            arrow/.style={-Stealth, thick},
            sum/.style={circle, draw, inner sep=0pt, minimum size=0.5cm},
        ]
            \node (input) [block, fill=green!5] {Observation};
            
            \node (encoder) [wideblock, right=1.8cm of input] {Shared Inference Model};
            
            \node (param_theta) [param, below=0.5cm of encoder.south west, xshift=1.2cm] {Shared Parameters};
            
            \node (latent) [dist, right=1.5cm of encoder] {Variational\\Parameters};    
            
            \node (sum_node) [sum, below=1cm of param_theta] {$+$};
            \node (hyper) [block, left=of sum_node, xshift=0.5cm] {Hypernetwork};
    
            \draw [arrow] (input) -- (encoder);
            \draw [arrow] (encoder) -- (latent);
    
            \draw [dashed] (encoder.south west) ++(1.2cm,0) -- (param_theta.north);
            \draw [arrow, dashed] (param_theta.south) -- (sum_node.north);
    
            \draw [arrow] (input.south) |- (hyper.west);
            \draw [arrow] (hyper.east) -- (sum_node.west);
            \draw [arrow, color=red!70!black] (sum_node.east) -| ($(encoder.south east) - (1.0cm,0)$);
    
            \node [below=0.1cm of sum_node, font=\scriptsize, align=center] {Parameter\\Modulation};
    
        \end{tikzpicture}
    }
    \caption{Schematic representation of the proposed approach. A shared inference model provides a global parametrization, which is adapted through instance-specific parameter modulations generated by a hypernetwork. This enables input-dependent inference while preserving the efficiency of amortization, without requiring iterative optimization.}
    \label{fig:hypernetwork_inference}
\end{figure}

The resulting inference model retains the global structure of the base model while enabling instance-adaptive posterior approximation.
This design preserves the computational advantages of amortized variational inference, as inference remains a single forward pass, while relaxing the rigidity imposed by fully shared parameters. 
By allowing the effective inference parametrization to vary across datapoints, the proposed approach is designed to mitigate the amortization gap without resorting to explicit input-specific optimization steps, thereby relaxing the functional restriction induced by globally shared inference parameters. A representation of the proposed methods is shown in Figure~\ref{fig:hypernetwork_inference}.

The main contributions of this work can be summarized as follows:
\begin{itemize}
    \item We introduce \emph{Instance-Adaptive Variational Autoencoders} (IA-VAE), a hypernetwork-based extension of amortized variational inference that enables input-dependent modulation of a shared inference model.
    
    \item We provide a theoretical analysis showing that, under mild conditions, the variational family induced by IA-VAE contains that of standard amortized inference. We show that it can recover any solution achievable by the base encoder, ensuring that the added flexibility does not compromise the fundamental inference performance.

    \item We empirically demonstrate that IA-VAE improves posterior approximation quality and reduces the amortization gap across both synthetic and real-world datasets, while preserving the computational efficiency of amortized inference.
\end{itemize}

%\newline
%\newline
This work is organized as follows:
in Section~\ref{sec:related_works}, we review related work on amortized variational inference and approaches to mitigate the amortization gap; 
In Section~\ref{sec:proposed_method}, we first provide a brief background on variational inference and its amortized formulation, and then introduce the proposed instance-adaptive parametrization;
in Section~\ref{sec:experiments}, we present the experimental setup, including both synthetic and real image datasets; 
in Section~\ref{sec:results}, we report and analyze the empirical results; 
finally, in Section~\ref{sec:conclusions}, we conclude and discuss future directions.
\section{Related Works}
\label{sec:related_works}
Amortized variational inference in VAEs replaces input-specific optimization with a shared inference network that maps each observation to its approximate posterior parameters. 
While this enables scalable training, it introduces the amortization gap, i.e., the discrepancy between the optimal variational parameters and those produced by the shared inference network. 
The following approaches aim to mitigate this gap by improving the inference mechanism.
Specifically, existing approaches can be broadly categorized into three classes: 
(i) methods that introduce iterative input-specific refinement, 
(ii) approaches that modify the training objective or explicitly model uncertainty in the inference network, and 
(iii) hypernetwork-based methods that generate global model parameters.

\paragraph{(i) Iterative Refinement.}
Early work by~\cite{hjelm2016iterative} proposed an iterative inference framework in which an inference network produces an initial estimate of the variational parameters that is subsequently refined through additional learned update steps.
Building on similar ideas, the authors in~\cite{marino2018iterative} proposed a method based on an encoder that explicitly performs multiple refinement iterations, learning a neural update rule that progressively improves the variational parameters.
Semi-amortized variational autoencoders (SA-VAE)~\cite{kim2018semi} followed a related strategy by augmenting a standard amortized encoder with additional gradient-based refinement steps applied to the variational parameters for each observation. Concretely, the encoder provides an initialization, which is then iteratively improved through stochastic gradient ascent on the ELBO. 

In contrast, our approach achieves adaptation in a single forward pass through a hypernetwork, avoiding iterative refinement. This eliminates the need to unroll input-specific optimization trajectories during training, which in iterative methods leads to increased memory and computational costs that scale with the number of refinement steps.

\paragraph{(ii) Objective-Based and Uncertainty-Aware Methods.}
Several works have investigated the structural sources of amortization suboptimality in VAEs. 
In particular, the authors in~\cite{cremer2018inference} analyzed inference suboptimality and derived a decomposition of the inference error into approximation and amortization components, showing that amortization error arises in part from the shared parametrization of the inference network.
More recently, the authors in~\cite{kim2021reducing} proposed a method that treats the discrepancy between the true posterior and the amortized approximation as uncertainty in the inference model. 
Specifically, the mean and variance functions of the variational posterior are modeled as Gaussian processes~\cite{hastie2009elements}, yielding a stochastic inference network that captures uncertainty in posterior approximation.
This approach reduces amortization error while retaining the efficiency of amortized inference, as posterior parameters are obtained through a single forward pass without input-specific refinement.
Similarly, amortized inference regularization (AIR)~\cite{shu2018amortized} introduces additional regularization on the encoder to control the capacity and smoothness of the amortization family, reshaping the training objective to improve generalization and mitigate inference errors.

In contrast to these approaches, our method leaves the variational objective unchanged and instead increases the expressiveness of the inference parametrization by relaxing the constraint of globally shared parameters.

\paragraph{(iii) Hypernetwork-Based Methods.}
A different line of research explores the generation of model parameters through auxiliary neural networks, commonly referred to as hypernetworks~\cite{ha2016hypernetworks}. 
In this framework, a secondary network produces the weights of a target network in a data-dependent manner, thereby enabling more flexible parametrizations.
HyperVAE~\cite{nguyen2021variational} modeled the parameters of the encoder and decoder as random variables generated by a higher-level generative model. 
Specifically, a hyper-level VAE learns a distribution over the parameters of a base VAE, capturing uncertainty in global network weights. 
Although not developed specifically for VAEs, the authors in~\cite{krueger2017bayesian} introduced Bayesian hypernetworks, in which a hypernetwork transforms samples from a simple noise distribution into the weights of a primary neural network, thereby inducing an implicit distribution over network parameters. 
This enables approximate Bayesian inference over weights while maintaining tractable training via backpropagation.
However, in both cases the generated parameters remain global and are not conditioned on individual observations at inference time. 
As a result, these approaches address uncertainty over model parameters rather than amortization suboptimality in variational inference.

Our approach is most closely related to this class, as it leverages a hypernetwork to increase the flexibility of the inference mechanism. 
However, unlike existing hypernetwork-based methods that generate global parameters shared across datapoints, our method produces \emph{instance-specific} adaptations of the inference network. 
This yields input-dependent encoder parameters, enabling more flexibility while preserving a shared global structure.

\section{Proposed Method}
\label{sec:proposed_method}

We now introduce a more flexible formulation of amortized variational inference that allows the inference model to adapt to individual observations, while retaining a shared global structure.
The key idea is to relax the constraint of a single set of encoder parameters by allowing them to vary as a function of the input. To this end, we augment a base inference model with a hypernetwork that produces input-dependent parameter modulations, yielding an observation-specific encoder.

This construction preserves the inductive bias~\cite{hastie2009elements} of amortized inference while increasing the flexibility of the variational approximation. We refer to the resulting framework as \emph{instance-adaptive variational autoencoder} (IA-VAE).
We first recall the variational inference framework and its amortized formulation, and then present the proposed model.

\subsection{Background}
We briefly review variational inference and its amortized formulation in latent variable models, focusing on VAEs, which provide the foundation for our method.

\subsubsection{Variational Inference.}
Let $\mathcal{D}=\{\vect{x}^{(i)}\}_{i=1}^{N}$ be a dataset of $N$ independent observations $\vect{x}^{(i)} \in \mathbb{R}^d$ drawn from an unknown data-generating process. 
We introduce latent variables $\vect{z} \in \mathbb{R}^k$, with typically $k \ll d$, capturing the unobserved structure underlying the data.
The latent variable model is defined through the joint distribution
\begin{equation}
p(\vect{x},\vect{z}; \theta)=p(\vect{x}|\vect{z}; \theta)p(\vect{z}; \theta),
\label{eq:latent_variable_model}
\end{equation}
where $p(\vect{z}; \theta)$ denotes the prior distribution over latent variables and 
$p(\vect{x} | \vect{z}; \theta)$ is the likelihood function.
The parameters $\theta \in \mathbb{R}^p$ denote the global parameters of the latent variable model.
Parameter learning and posterior inference are generally intractable, since both require evaluating the marginal likelihood $p(\vect{x}; \theta) = \int p(\vect{x}, \vect{z}; \theta)d\vect{z}$, which requires integration over the latent variables. 
Typically, this integral admits no closed-form solution and becomes computationally prohibitive in high dimensions~\cite{kingma2013auto,blei2017variational}. 
Consequently, practical approaches rely on approximate inference methods.

Variational inference addresses this intractability by introducing a tractable variational family $\mathcal{Q}$ of distributions over latent variables, and seeking a member $q(\vect{z}; \lambda) \in \mathcal{Q}$, parameterized by variational parameters $\lambda$, that approximates the true posterior $p(\vect{z}|\vect{x}; \theta)$ by minimizing the Kullback–Leibler divergence~\cite{blei2017variational} $D_\mathrm{KL}\!\left( q(\vect{z};\lambda) \,\|\, p(\vect{z} | \vect{x};\theta) \right)$ .
However, this divergence cannot be minimized directly, as it depends on the unknown posterior $p(\vect{z} | \vect{x};\theta)$.
Instead, the marginal log-likelihood admits the decomposition
\begin{equation}
\log p(\vect{x};\theta) = \mathrm{ELBO}(\lambda,\theta,\vect{x}) + D_\mathrm{KL}\!\left( q(\vect{z};\lambda) \,\|\,  p(\vect{z} | \vect{x};\theta) \right),
\label{eq:vi_marginal_loglik}
\end{equation}
where the evidence lower bound (ELBO) is defined as
\begin{equation}
\mathrm{ELBO}(\lambda,\theta,\vect{x})
=
\mathbb{E}_{q(\vect{z};\lambda)}
\big[
\log p(\vect{x} | \vect{z};\theta)
\big]
-
D_\mathrm{KL}\!\left(
q(\vect{z};\lambda)
\,\|\, 
p(\vect{z};\theta)
\right).
\label{eq:vi_elbo}
\end{equation}
Since the Kullback–Leibler divergence is non-negative, the ELBO is a lower bound on the log-marginal likelihood. Maximizing the ELBO in Eq.~\eqref{eq:vi_elbo} with respect to the variational parameters $\lambda$ is therefore equivalent to minimizing the divergence between the variational distribution and the true posterior.

Thus, given a dataset $\mathcal{D}$, variational inference seeks to determine both the generative parameters $\theta$ and a set of local variational parameters $\{\lambda^{(i)}\}_{i=1}^N$ by maximizing $\sum_{i=1}^{N} \mathrm{ELBO}(\lambda^{(i)}, \theta, \vect{x}^{(i)})$, which provides a tractable surrogate of the marginal log-likelihood of the dataset.

\subsection{Amortized Variational Inference and Variational Autoencoders}
In classical variational inference, a distinct set of variational parameters 
$\lambda^{(i)}$ is optimized for each datapoint $\vect{x}^{(i)}$. 
While this approach yields flexible posterior approximations, it becomes computationally demanding in large-scale settings, as it requires solving a separate optimization problem for every observation.

Amortized variational inference alleviates this limitation by replacing input-specific optimization with a shared inference function. 
Instead of treating $\lambda^{(i)}$ as free parameters, they are obtained as the output of a parameterized mapping
\begin{equation}
\lambda^{(i)} = f(\vect{x}^{(i)}; \phi),
\label{eq:avi_functional}
\end{equation}
where $f$ is an inference model with global learnable parameters $\phi \in \mathbb{R}^q$. The resulting variational distribution can therefore be written as
\begin{equation}
q(\vect{z};\lambda^{(i)}) = q\big(\vect{z}; f(\vect{x}^{(i)};\phi)\big),
\label{eq:avi_posterior}
\end{equation}
which is commonly denoted by $q(\vect{z}|\vect{x}^{(i)};\phi)$.
In this formulation, the variational parameters are no longer independently optimized but are produced through by learned function, thereby amortizing the cost of inference across the dataset.

A prominent instance of amortized variational inference is provided by variational autoencoder (VAE)~\cite{kingma2013auto}. 
In this framework, both the generative model $p(\vect{x}|\vect{z};\theta)$ 
% \apicella{(corresponding to the VAE decoder)} E' NEL PERIODO SUCCESSIVO
and the inference model 
$q(\vect{z}|\vect{x};\phi)$ 
% \apicella{(corresponding to the VAE encoder)} E' NEL PERIODO SUCCESSIVO
are parameterized by artificial neural networks. 
The inference network, i.e. the encoder, maps each observation to the parameters of its approximate posterior distribution, while the generative network, i.e. the decoder, maps latent variables to the data space.
The prior distribution over the latent variables is typically chosen to be a standard isotropic Gaussian, $p(\vect{z}) = \mathcal{N}(\vect{z}; \vect{0}, I)$.
For simplicity of notation, in the following we denote the prior as $p(\vect{z})$, omitting the explicit dependence on the parameters $\theta$ whenever no ambiguity arises.

Under this formulation, marginal log-likelihood in Eq.~\eqref{eq:vi_marginal_loglik} depends on both the generative parameters $\theta$ and the inference parameters $\phi$, obtaining
\begin{equation}
\log p(\vect{x};\theta) = \mathrm{ELBO}(\phi,\theta,\vect{x}) + D_\mathrm{KL}\!\left( q(\vect{z}|\vect{x};\phi) \,\|\,  p(\vect{z}|\vect{x};\theta) \right),
\label{eq:vae_marginal_loglik}
\end{equation}
where the ELBO is defined as
\begin{equation}
\mathrm{ELBO}(\phi,\theta,\vect{x})
=
\mathbb{E}_{q(\vect{z}|\vect{x};\phi)}
\big[
\log p(\vect{x} | \vect{z};\theta)
\big]
-
D_\mathrm{KL}\!\left(
q(\vect{z}|\vect{x};\phi)
\,\|\, 
p(\vect{z})
\right).
\label{eq:vae_elbo}
\end{equation}
The objective becomes $\sum_{i=1}^{N} \mathrm{ELBO}\big(\phi, \theta; \vect{x}^{(i)}\big)$, which is jointly optimized with respect to the parameters $\theta$ and $\phi$.

Training is performed by maximizing the ELBO using gradient-based optimization. 
To enable gradient estimation, samples from $q(\vect{z} | \vect{x};\phi)$ are expressed as deterministic transformations of auxiliary noise variables through the \textit{reparameterization trick}~\cite{kingma2013auto}, allowing gradients to propagate through both the encoder and decoder in an end-to-end differentiable manner.

\subsection{Proposed Method: Instance-Adaptive Variational Autoencoder}
Let $q(\vect{z}|\vect{x};\phi_{\mathrm{AVI}})$ denote an encoder with global parameters $\phi_{\mathrm{AVI}}$ trained on a dataset $\mathcal{D}$ via amortized variational inference. 
To introduce instance-specific flexibility, we augment this architecture with a hypernetwork $h(\vect{x};\psi)$ having parameters $\psi \in \mathbb{R}^r$ that generates parameter modulations for each input data. 
To this end, we substitute the encoder $q(\vect{z}|\vect{x};\phi_{\mathrm{AVI}})$  with a new instance-adaptive encoder $q(\vect{z}|\vect{x};\phi(\mathbf{x};\psi))$. 
Specifically, given an observation $\vect{x}$, the hypernetwork produces a parameter adjustment that modifies the parameters of the base inference model, yielding the instance-specific inference parameters as follows: 
\begin{equation}
\label{eq:iavae}
\phi(\vect{x}; \psi) = \phi_{\mathrm{AVI}} + h(\vect{x};\psi).
\end{equation}
This modulation induces an inductive bias toward input-dependent adaptations of the shared encoder, rather than learning entirely independent inference mappings for each datapoint. As a result, the model retains the global structure captured by $\phi_{\mathrm{AVI}}$ while enabling observation-specific adaptation.

Importantly, the proposed method does not alter the variational objective itself; rather, it enlarges the class of inference mappings used to parameterize the variational posterior.
Indeed, the marginal log-likelihood in Eq.~\eqref{eq:vae_marginal_loglik} becomes
\begin{equation}
\label{eq:avi_marginal_loglik}
\log p(\vect{x};\theta) = \mathrm{ELBO}(\psi,\theta,\vect{x}) + D_\mathrm{KL}\!\left( q(\vect{z}|\vect{x};\phi(\vect{x};\psi)) \,\|\,  p(\vect{z}|\vect{x};\theta) \right),
\end{equation}
where the ELBO is defined as
\begin{equation}
\label{eq:iavae_elbo}
\mathrm{ELBO}(\psi,\theta,\vect{x}) =
\mathbb{E}_{q(\vect{z}|\vect{x};\phi(\vect{x};\psi))}
\big[
\log p(\vect{x} | \vect{z};\theta)
\big]
-
D_\mathrm{KL}\!\left(
q(\vect{z}|\vect{x};\phi(\vect{x};\psi))
\,\|\, 
p(\vect{z})
\right).
\end{equation}
Training is performed by maximizing the standard VAE objective 
$\sum_{i=1}^{N} \mathrm{ELBO}\big(\psi, \theta; \vect{x}^{(i)}\big)$, with respect to the parameters $\theta$ and $\psi$.
A graphical representation of the proposal is shown in Figure~\ref{fig:proposal_iavae}.

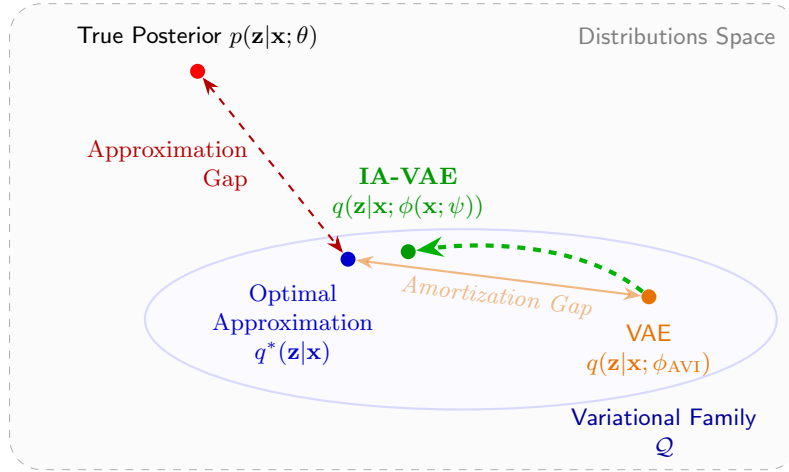
\begin{figure}[t]
    \centering
    \begin{tikzpicture}[
        node distance=1.2cm,
        font=\sffamily\small,
        >=Stealth
    ]
        \draw[fill=gray!3, draw=gray!60, dashed, rounded corners=12pt] (-1, -0.3) rectangle (9.5, 5.9);
        \node[anchor=north east, gray] at (9.3, 5.7) {Distributions Space};
    
        \draw[thick, fill=blue!2, draw=blue!15] (5, 1.7) ellipse (4.2cm and 1.2cm);
        \node[blue!60!black, align=center] at (7.7, 0.2) {Variational Family \\ $\mathcal{Q}$};
    
        \node[circle, fill=red, inner sep=2pt] (P) at (1.5, 5) {};
        \node[above=2pt of P, align=center] {True Posterior  $p(\vect{z}|\vect{x}; \theta)$};
        
        \node[circle, fill=blue!80!black, inner sep=2pt] (Qstar) at (3.5, 2.5) {};
        \node[below left=4pt and -15pt of Qstar, align=center, blue!80!black, font=\footnotesize] {Optimal\\Approximation\\$q^*(\vect{z}|\vect{x})$};
        
        \node[circle, fill=orange!90!black, inner sep=2pt] (Qphi) at (7.5, 2.0) {};
        \node[below=4pt and -2cm of Qphi, orange!90!black, align=center] {VAE \\ $q(\vect{z}|\vect{x};\phi_{\mathrm{AVI}})$};
    
        \draw[<->, thick, red!70!black, dashed] (P) -- (Qstar) 
            node[midway, left=6pt, align=right, font=\footnotesize] {Approximation\\Gap};
    
        \node[circle, fill=green!60!black, inner sep=2pt] (Qhyper) at (4.3, 2.6) {};
        \node[above=6pt of Qhyper, green!60!black, align=center, font=\footnotesize\bfseries] (Lhyper) {IA-VAE \\ $q(\vect{z}|\vect{x}; \phi(\vect{x};\psi))$};

         \draw[->, ultra thick, green!70!black, dashed] (Qphi) .. controls (6.5, 2.8) and (5.0, 2.7) .. (Qhyper);

         \draw[<->, thick, orange!90!black, opacity=0.5] (Qstar) -- (Qphi) 
            node[midway, below, sloped, font=\footnotesize\itshape] {Amortization Gap};
    
    \end{tikzpicture}
    \caption{Conceptual illustration of the proposed IA-VAE method. 
    Starting from the amortized solution $q(\vect{z}|\vect{x};\phi_{\mathrm{AVI}})$, input-dependent parameter modulations adapt the encoder for each observation, yielding an instance-adaptive encoder $q(\vect{z}|\vect{x};\phi(\vect{x};\psi))$. This induces a shift of the variational distribution toward the optimal approximation $q^*(\vect{z}|\vect{x})$ within the variational family.}
    \label{fig:proposal_iavae}
\end{figure}

\paragraph{Augmenting amortized variational inference with IA-VAE.}
The proposed construction can be interpreted as an extension of a pretrained amortized inference model, in which a fixed set of encoder parameters is augmented with input-dependent modulations generated by a hypernetwork.
We now formalize the relationship between the variational family induced by the base amortized encoder and the one induced by IA-VAE.

Let $\mathcal{X} \subseteq \mathbb{R}^d$ denote the input space, and let $\Psi$ be the parameter space of the hypernetwork.
Consider a pretrained amortized inference model $q(\vect{z}|\vect{x};\phi_{\mathrm{AVI}})$ with fixed parameters $\phi_{\mathrm{AVI}}$.
We define the corresponding variational family as
\begin{equation}
\mathcal{Q}_{\mathrm{AVI}}
:=
\{
q(\vect{z}|\vect{x}; \phi_{\mathrm{AVI}})
:\vect{x} \in \mathcal X
\}.
\label{eq:q_avi_fixed_map}
\end{equation}
In IA-VAE, the inference parameters are modulated as in Eq.~\eqref{eq:iavae}, yielding the variational family
\begin{equation}
\mathcal{Q}_{\mathrm{IA\mbox{-}VAE}}
:=
\left\{
q(\vect{z}|\vect{x}; \phi_{\mathrm{AVI}} + h(\vect{x};\psi))
:
\vect{x} \in \mathcal X, \psi \in \Psi
\right\}.
\label{eq:q_iavae_family_fixed}
\end{equation}

\begin{proposition}
\label{prop:avi_subset_iavae}
Assume that there exists a non-empty subset $\Psi_0 \subseteq \Psi$ such that $h(\vect{x};\psi)=\vect{0}, \forall \vect{x}\in\mathcal X,\ \forall \psi \in \Psi_0$.
Then $\mathcal{Q}_{\mathrm{AVI}} \subseteq \mathcal{Q}_{\mathrm{IA\mbox{-}VAE}}$.
\end{proposition}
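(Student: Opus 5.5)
The plan is a direct element-chasing argument: I will show that every member of $\mathcal{Q}_{\mathrm{AVI}}$, as defined in Eq.~\eqref{eq:q_avi_fixed_map}, also satisfies the membership condition of $\mathcal{Q}_{\mathrm{IA\mbox{-}VAE}}$ in Eq.~\eqref{eq:q_iavae_family_fixed}. I will fix an arbitrary element $q \in \mathcal{Q}_{\mathrm{AVI}}$. By definition, there is some $\vect{x}_0 \in \mathcal{X}$ with $q(\vect{z}) = q(\vect{z}|\vect{x}_0;\phi_{\mathrm{AVI}})$.

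Next I will use the hypothesis that $\Psi_0$ is non-empty to pick some $\psi_0 \in \Psi_0 \subseteq \Psi$. The assumption gives $h(\vect{x};\psi_0) = \vect{0}$ for every $\vect{x} \in \mathcal{X}$, and in particular $h(\vect{x}_0;\psi_0) = \vect{0}$. Substituting into Eq.~\eqref{eq:iavae} yields
\begin{equation*}
\phi(\vect{x}_0;\psi_0) = \phi_{\mathrm{AVI}} + h(\vect{x}_0;\psi_0) = \phi_{\mathrm{AVI}}.
\end{equation*}
Hence $q(\vect{z}|\vect{x}_0;\phi_{\mathrm{AVI}} + h(\vect{x}_0;\psi_0)) = q(\vect{z}|\vect{x}_0;\phi_{\mathrm{AVI}}) = q(\vect{z})$. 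Since $\vect{x}_0 \in \mathcal{X}$ and $\psi_0 \in \Psi$, this exhibits $q$ as a member of $\mathcal{Q}_{\mathrm{IA\mbox{-}VAE}}$. Because $q$ was arbitrary, the inclusion follows.

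There is no real obstacle in this argument; the only point needing care is a bookkeeping one. The encoder must be a well-defined function of its parameter argument, so that equal parameters yield the same distribution, and the same input $\vect{x}_0$ must be used on both sides. I will remark that a stronger statement also holds: the entire conditional map $\vect{x}\mapsto q(\vect{z}|\vect{x};\phi_{\mathrm{AVI}})$ is realized by the single choice $\psi_0$.

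I will also note that the hypothesis holds in practice. For example, if the last layer of $h$ is linear, setting its weights and bias to zero places $\psi$ in $\Psi_0$. This observation motivates the corollary that the optimal IA-VAE ELBO is no worse than the optimal amortized ELBO.
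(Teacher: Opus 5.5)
Your proof is correct and follows essentially the same element-chasing argument as the paper. You fix an element of $\mathcal{Q}_{\mathrm{AVI}}$ indexed by some $\vect{x}\in\mathcal{X}$, pick $\psi_0\in\Psi_0$, and use $h(\vect{x};\psi_0)=\vect{0}$ to show it is the member of $\mathcal{Q}_{\mathrm{IA\mbox{-}VAE}}$ at $(\vect{x},\psi_0)$.
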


\begin{proof}
Take any element of $\mathcal{Q}_{\mathrm{AVI}}$.
By definition, it can be written as $q(\vect{z}|\vect{x}; \phi_{\mathrm{AVI}})$ for some $\vect{x} \in \mathcal X$.
Since $\Psi_0$ is non-empty, choose any $\psi_0 \in \Psi_0$.
By assumption, $h(\vect{x};\psi_0)=\vect{0}, \forall \vect{x} \in \mathcal{X}$.
Therefore,
\begin{equation}
\label{eq:theorem_zero_modulation}
\phi_{\mathrm{AVI}} + h(\vect{x};\psi_0)
=
\phi_{\mathrm{AVI}},
\end{equation}
and hence
\begin{equation}
\label{eq:theorem_same_solution}
q(\vect{z}|\vect{x}; \phi_{\mathrm{AVI}} + h(\vect{x};\psi_0))
=
q(\vect{z}|\vect{x}; \phi_{\mathrm{AVI}}).
\end{equation}
It follows that every element of $\mathcal{Q}_{\mathrm{AVI}}$ also belongs to $\mathcal{Q}_{\mathrm{IA\mbox{-}VAE}}$, which proves the claim.
\end{proof}

\begin{remark}
A sufficient condition for the assumption of Proposition~\ref{prop:avi_subset_iavae} to hold is that the hypernetwork $h(\vect{x};\psi)$ admits parameter configurations that yield identically zero output for all inputs. 
For instance, this is satisfied if the weights and biases of the output layer are set to zero, regardless of the values of the preceding layers. Hence, in typical neural network parameterizations, the set $\Psi_0$ is non-empty.
\end{remark}

\begin{corollary}
\label{cor:elbo_pointwise}
Under the assumptions of Proposition~\ref{prop:avi_subset_iavae}, for any fixed generative parameters $\theta$ and any observation $\vect{x} \in \mathcal{X}$,
\[
\mathrm{ELBO}(\phi_{\mathrm{AVI}},\theta,\vect{x})
\leq
\sup_{\psi \in \Psi}
\mathrm{ELBO}(\psi,\theta,\vect{x}).
\]
\end{corollary}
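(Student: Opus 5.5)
The plan is to exhibit a single hypernetwork parameter whose ELBO at $\vect{x}$ coincides with the baseline ELBO, and then bound by the supremum. Fix $\theta$ and $\vect{x}\in\mathcal X$. By the assumption of Proposition~\ref{prop:avi_subset_iavae}, $\Psi_0$ is non-empty, so we pick some $\psi_0\in\Psi_0$. As in Eq.~\eqref{eq:theorem_zero_modulation}, this choice gives $\phi(\vect{x};\psi_0)=\phi_{\mathrm{AVI}}+h(\vect{x};\psi_0)=\phi_{\mathrm{AVI}}$.

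Next we show that the two ELBOs agree at $\psi_0$. By Eq.~\eqref{eq:theorem_same_solution}, the variational distribution $q(\vect{z}|\vect{x};\phi(\vect{x};\psi_0))$ equals $q(\vect{z}|\vect{x};\phi_{\mathrm{AVI}})$. Compare the definitions in Eq.~\eqref{eq:iavae_elbo} and Eq.~\eqref{eq:vae_elbo}. The ELBO depends on the inference parameters only through the distribution $q(\cdot|\vect{x};\cdot)$, both in the expected reconstruction term and in the KL term to the prior $p(\vect{z})$. The generative parameters $\theta$ are held fixed and are common to both. We therefore substitute directly and obtain
\[
\mathrm{ELBO}(\psi_0,\theta,\vect{x})=\mathrm{ELBO}(\phi_{\mathrm{AVI}},\theta,\vect{x}).
\]

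Finally, since $\psi_0\in\Psi$, the definition of the supremum gives $\mathrm{ELBO}(\psi_0,\theta,\vect{x})\le\sup_{\psi\in\Psi}\mathrm{ELBO}(\psi,\theta,\vect{x})$. This holds even if the supremum is $+\infty$ or is not attained. Chaining this with the equality above yields the claim.

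The argument is essentially bookkeeping. The only point needing care is the notational overloading of $\mathrm{ELBO}(\cdot,\theta,\vect{x})$, whose first argument is $\phi$ in one case and $\psi$ in the other. We should state explicitly that both are the same functional of $q$, evaluated at $q(\cdot|\vect{x};\phi_{\mathrm{AVI}})$ and at $q(\cdot|\vect{x};\phi(\vect{x};\psi))$ respectively. We should also note that the ELBO is assumed well defined, possibly as $-\infty$, so that the equality of distributions transfers to equality of the ELBO values.
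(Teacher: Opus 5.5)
Your proposal is correct and follows the paper's proof essentially step for step. You pick $\psi_0\in\Psi_0$ and use zero modulation to get $q(\vect{z}|\vect{x};\phi_{\mathrm{AVI}}+h(\vect{x};\psi_0))=q(\vect{z}|\vect{x};\phi_{\mathrm{AVI}})$, so the two ELBO values coincide, and then bound $\mathrm{ELBO}(\psi_0,\theta,\vect{x})$ by the supremum over $\Psi$; your remarks on the overloaded first argument of $\mathrm{ELBO}$ and on well-definedness are reasonable clarifications that the paper leaves implicit.
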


\begin{proof}
By Proposition~\ref{prop:avi_subset_iavae}, there exists $\psi_0 \in \Psi$ such that Eq.~\eqref{eq:theorem_same_solution} holds.
Therefore, the corresponding ELBO values coincide:
\begin{equation}
\mathrm{ELBO}(\phi_{\mathrm{AVI}},\theta,\vect{x})
=
\mathrm{ELBO}(\psi_0,\theta,\vect{x}).
\end{equation}
Since $\psi_0 \in \Psi$, we obtain
\begin{equation}
\label{eq:corollary_sup}
\mathrm{ELBO}(\phi_{\mathrm{AVI}},\theta,\vect{x})
\leq
\sup_{\psi \in \Psi}
\mathrm{ELBO}(\psi,\theta,\vect{x}),
\end{equation}
which proves the claim.
\end{proof}

\subsubsection{Discussion.}
The above results provide a formal justification for IA-VAE as a principled extension of amortized variational inference. 
In particular, Proposition~\ref{prop:avi_subset_iavae} shows that, under an assumption satisfied by standard neural network parameterizations, the IA-VAE variational family contains the amortized one.
As a consequence, IA-VAE preserves all solutions achievable by standard amortized inference, since the base encoder is recovered as a special case corresponding to zero modulation.

From an optimization perspective, Corollary~\ref{cor:elbo_pointwise} establishes that the best ELBO obtainable within IA-VAE is at least as high as that of standard amortized inference.
This result strongly motivates the proposed construction, as IA-VAE retains the efficiency of amortization while enabling more flexible inference mappings.

Importantly, these guarantees are purely representational and do not, by themselves, ensure improved performance after training.
Nevertheless, they establish IA-VAE as a more expressive extension of amortized inference that retains its baseline solutions. In practice, this additional flexibility can be leveraged to obtain tighter variational approximations, as demonstrated by the empirical results.

\subsubsection{Blockwise Parameter Modulation.}
To ensure scalability, the hypernetwork does not generate the modulation of the entire parameter set of the inference model in a single step. 
Instead, following~\cite{ha2016hypernetworks}, parameter modulations are generated in blocks corresponding to different components of the inference model. 
Specifically, each block of parameters $\ell$ is associated with a learnable embedding $\vect{e}_\ell \in \mathbb{R}^l$ that identifies the corresponding part of the inference model. 
Given an observation $\vect{x}$ and an embedding $\vect{e}_\ell$, the hypernetwork $h(\cdot;\psi)$ produces the parameter modulation for that block.
This mechanism allows a single hypernetwork to generate parameters for inference models of arbitrary size without requiring the hypernetwork itself to mirror the full size of the base inference model.
Formally, for the $\ell$-th parameter block we generate
\begin{equation}
\phi_\ell(\vect{x}; \psi) = \phi_{\mathrm{AVI}, \ell} + h(\vect{x}, \vect{e}_\ell;\psi).
\label{eq:iavae_embeddings}
\end{equation}
A graphical representation of the IA-VAE architecture is shown in Figure~\ref{fig:iavae_details}.
The full procedure of IA-VAE is shown in Algorithm~\ref{alg:IA-VAE}. 

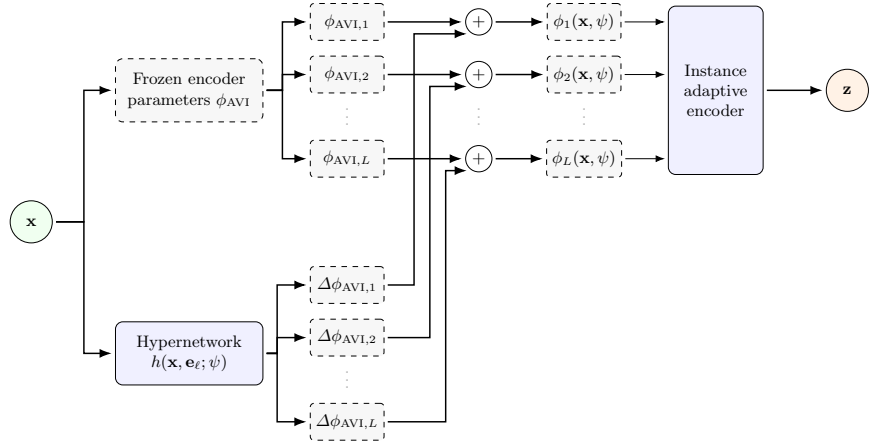
\begin{figure*}[t]
\centering
\scalebox{0.7}{
\begin{tikzpicture}[
    font=\small,
    >=Latex,
    node distance=0.4cm and 1.2cm,
    bigbox/.style={draw, rounded corners=4pt, align=center, minimum height=1.2cm, minimum width=2.8cm},
    block/.style={draw, rounded corners=2pt, align=center, minimum height=0.7cm, minimum width=1.4cm},
    op/.style={draw, circle, minimum size=0.5cm, inner sep=0pt},
    data/.style={draw, circle, minimum size=0.8cm, inner sep=0pt, align=center},
    note/.style={font=\scriptsize, align=center, gray!80},
    every node/.style={outer sep=2pt}
]

% --- INPUT ---
\node[data, fill=green!6] (x) at (0,1) {$\mathbf{x}$};

% --- BRANCH SUPERIORE (Base Encoder) ---
\node[bigbox, fill=blue!6, dashed, fill=gray!6] (base) at (3, 3.5) {Frozen encoder\\parameters $\phi_{\mathrm{AVI}}$};

\node[block, fill=blue!2, dashed, fill=gray!6] (b1) at (6, 4.8) {$\phi_{\mathrm{AVI},1}$};
\node[block, fill=blue!2, dashed, fill=gray!6] (b2) at (6, 3.8) {$\phi_{\mathrm{AVI},2}$};
\node[gray!60] (b_dots) at (6, 3.1) {$\vdots$}; 
\node[block, fill=blue!2, dashed, fill=gray!6] (b3) at (6, 2.2) {$\phi_{\mathrm{AVI},L}$};

\draw[->, thick] (base.east) -- ++(0.3,0) |- (b1.west);
\draw[->, thick] (base.east) -- ++(0.3,0) |- (b2.west);
\draw[->, thick] (base.east) -- ++(0.3,0) |- (b3.west);

% --- BRANCH INFERIORE (Hypernetwork) ---
\node[bigbox, fill=blue!6] (hyper) at (3, -1.5) {Hypernetwork\\$h(\mathbf{x},\mathbf{e}_\ell;\psi)$};
\draw[->, thick] (x) -- ++(1,0) |- (hyper.west);
\draw[->, thick] (x) -- ++(1,0) |- (base.west);
% Outputs Hypernetwork
\node[block, dashed, fill=gray!6] (h1) at (6, -0.2) {$\Delta \phi_{\mathrm{AVI},1}$};
\node[block, dashed, fill=gray!6] (h2) at (6, -1.2) {$\Delta \phi_{\mathrm{AVI},2}$};
\node[gray!60] (e_dots) at (6, -1.9) {$\vdots$};
\node[block, dashed, fill=gray!6] (h3) at (6, -2.8) {$\Delta \phi_{\mathrm{AVI},L}$};

\draw[->, thick] (hyper.east) -- ++(0.2,0) |- (h1.west);
\draw[->, thick] (hyper.east) -- ++(0.2,0) |- (h2.west);
\draw[->, thick] (hyper.east) -- ++(0.2,0) |- (h3.west);

% --- SOMMATORI ---
\node[op] (p1) at (8.5, 4.8) {$+$};
\node[op] (p2) at (8.5, 3.8) {$+$};
\node[gray!60] (p_dots) at (8.5, 3.1) {$\vdots$};
\node[op] (p3) at (8.5, 2.2) {$+$};

\draw[->, thick] (b1) -- (p1);
\draw[->, thick] (b2) -- (p2);
\draw[->, thick] (b3) -- (p3);

% Frecce verticali dai blocchi h ai sommatori (EVITA INCROCI)
\draw[->, thick] (h1.east) -- ++(0.5,0) |- (p1.south west);
\draw[->, thick] (h2.east) -- ++(0.8,0) |- (p2.south west);
\draw[->, thick] (h3.east) -- ++(1.1,0) |- (p3.south west);

% --- PARAMETRI FINALI ---
\node[block, dashed, fill=gray!6] (phi1) at (10.5, 4.8) {$\phi_1(\mathbf{x}, \psi)$};
\node[block, dashed, fill=gray!6] (phi2) at (10.5, 3.8) {$\phi_2(\mathbf{x}, \psi)$};
\node[gray!60] (phi_dots) at (10.5, 3.1) {$\vdots$};
\node[block, dashed, fill=gray!6] (phi3) at (10.5, 2.2) {$\phi_L(\mathbf{x}, \psi)$};

\draw[->, thick] (p1) -- (phi1);
\draw[->, thick] (p2) -- (phi2);
\draw[->, thick] (p3) -- (phi3);

% --- ENCODER ADATTIVO ---
\node[bigbox, fill=blue!6, minimum width=1.8cm, minimum height=3.2cm] (enc) at (13, 3.5) {Instance\\adaptive\\encoder};

\draw[->] (phi1.east) -- (enc.west|-phi1.east);
\draw[->] (phi2.east) -- (enc.west|-phi2.east);
\draw[->] (phi3.east) -- (enc.west|-phi3.east);

\node[data, fill=orange!10] (z) at (15.5, 3.5) {$\mathbf{z}$};

\draw[->, thick] (enc.east) -- ++(0.3,0) |- (z.west);

\end{tikzpicture}}
\caption{Architectural illustration of the proposed IA-VAE framework. 
A frozen encoder with global parameters $\phi_{\mathrm{AVI}}$ is augmented by a hypernetwork $h(\vect{x}, \vect{e};\psi)$ that generates input-dependent parameter modulations $\Delta \phi_{\mathrm{AVI, \ell}}$ for each parameter block $\ell$, with $1 \le \ell \le L$. 
These modulations are combined with the corresponding base parameters to produce instance-specific encoder parameters $\phi(\mathbf{x}, \psi)$, yielding an adaptive inference encoder with enhanced flexibility in the variational posterior.}
\label{fig:iavae_details}
\end{figure*}

\begin{algorithm}[t]
\caption{Instance-Adaptive Amortized Variational Inference}
\label{alg:IA-VAE}
\begin{algorithmic}
\STATE \textbf{Input:} frozen base inference parameters $\phi_{\mathrm{AVI}}$, generative parameters $\theta$, hypernetwork parameters $\psi$, learnable block embeddings $\{e_\ell\}_{\ell=1}^{L}$, Monte Carlo samples $S$, dataset $\mathcal{D}$
\STATE
\STATE Sample minibatch $\{\vect{x}^{(i)}\}_{i=1}^{B} \sim \mathcal{D}$
\FOR{$i=1$ to $B$}
    \FOR{$\ell=1$ to $L$}
        \STATE $\phi_\ell^{(i)} \leftarrow \phi_{\mathrm{AVI},\ell} + h(\vect{x}^{(i)}, \vect{e}_\ell;\psi)$
    \ENDFOR
    \STATE $\phi^{(i)} \leftarrow \{\phi_\ell^{(i)}\}_{\ell=1}^{L}$
    \FOR{$s=1$ to $S$}
        \STATE Sample $\vect{z}^{(s)} \sim q(\vect{z} | \vect{x}^{(i)};\phi^{(i)})$ using the reparameterization trick
        \STATE $\mathcal{L}_s^{(i)} \leftarrow \log p(\vect{x}^{(i)} | \vect{z}^{(s)};\theta)$
    \ENDFOR
    \STATE $\mathcal{L}^{(i)} \leftarrow \frac{1}{S} \sum_{s=1}^S \mathcal{L}_s^{(i)} - D_\mathrm{KL}\!\left(q(\vect{z} | \vect{x}^{(i)};\phi^{(i)}) \,\|\, p(\vect{z})\right)$
\ENDFOR
\STATE $\mathcal{L} \leftarrow \frac{1}{B}\sum_{i=1}^{B}\mathcal{L}^{(i)}$
\STATE Update $\theta$, $\psi$, and $\{e_\ell\}_{\ell=1}^{L}$ based on $\nabla_{\theta}\mathcal{L}$, $\nabla_{\psi}\mathcal{L}$, and $\nabla_{\{\vect{e}_\ell\}}\mathcal{L}$
\end{algorithmic}
\end{algorithm}

\section{Experimental Setup}
\label{sec:experiments}
In this section, we describe the experimental setup used to evaluate IA-VAE.
We consider two complementary experimental scenarios: a synthetic dataset and image datasets.

The synthetic setting is designed to provide a controlled environment in which the true generative process is fully known. 
This allows us to isolate the amortization gap and directly assess the quality of the inferred posterior, disentangling inference errors from modeling errors in the decoder. 
In contrast, experiments on image datasets evaluate the proposed method in realistic high-dimensional settings, where the true generative process is unknown and such direct analyses are not feasible.

Our primary objective is to assess to what extent instance-adaptive inference can reduce the amortization gap. 
To this end, we consider multiple evaluation criteria.
We first measure improvements in terms of ELBO, which quantifies the overall quality of the learned latent representations and is applicable in both experimental settings.
In addition, in the synthetic setting, we leverage access to the true generative model to directly evaluate the accuracy of the inferred posterior. 
Specifically, we compare inferred representations against reference quantities such as the maximum a posteriori (MAP) estimate and the local geometry of the true posterior. 
These analyses provide a more fine-grained assessment of inference quality, allowing us to determine whether improvements in ELBO correspond to more accurate approximations of the true posterior.

Furthermore, still in the controlled synthetic setting, we investigate robustness to random initialization and parameter efficiency. 
This is particularly meaningful in this regime, where the decoder is fixed to the true generative model and does not introduce additional sources of variability. 
As a result, differences in performance can be more directly attributed to the inference mechanism itself. 
In this context, robustness across random initializations allows us to assess the stability of the method, while parameter efficiency analysis helps determine whether similar improvements could be obtained by simply increasing the capacity of a standard encoder.

In the synthetic setting, in order to further isolate the amortization gap, we do not introduce a train/validation/test split. 
All evaluations are performed on the same dataset used for training, as our goal is not to assess generalization but to analyze inference quality under controlled conditions, avoiding additional variability induced by data partitioning.

In contrast, in the image data experiments, evaluation is necessarily restricted to ELBO computed on held-out data, which serve as proxies for inference quality. 
For each dataset, all experiments are repeated across multiple random seeds. 
In addition, we perform statistical hypothesis testing to assess the significance of the observed performance differences.
All models are trained on the standard training split of each dataset, with model selection performed on a validation set obtained by randomly sampling $15\%$ of the training data, and final performance reported on the predefined test sets, following a strict protocol to prevent data leakage, as defined in~\cite{apicella2025don}.

In the following, we describe the datasets, model configurations, and evaluation protocols adopted in each regime. 
A summary of all experimental settings and hyperparameters is reported in Table~\ref{tab:hparams_all}.

\begin{table}[t]
\centering
\caption{Summary of hyperparameters and experimental settings for synthetic and image data experiments.}
\scalebox{0.8}{
    \begin{tabular}{lll}
    \toprule
    \textbf{Component} & \textbf{Synthetic Data} & \textbf{Image Data} \\
    \midrule
    
    \multicolumn{3}{c}{\textit{Dataset}} \\
    \midrule
    Dataset & Synthetic ($N=5000$) & OMNIGLOT, MNIST, Fashion MNIST \\
    Latent dim & 2 & 32 \\
    Observation dim & 3 & $28 \times 28$ (binarized) \\
    Likelihood & Gaussian ($\sigma=0.1$) & Bernoulli \\
    Train / Test split & -- & Standard protocol \\
    Train / Val split & -- & 85\% / 15\% \\
    
    \midrule
    \multicolumn{3}{c}{\textit{IA-VAE Architecture}} \\
    \midrule
    VAE encoder & MLP (1 hidden layer, 2 units, ReLU) & 3-layer ResNet \\
    VAE decoder & Fixed true mapping & 12-layer Gated PixelCNN \\
    Hypernetwork & Single linear projection & Structured (conv: slice-wise, FC: column-wise) \\
    $\mathbf{e}_\ell$ dim & 2 & 16 \\
    $\mathbf{e}^{\text{in}}$ dim & -- & 8 \\
    
    \midrule
    \multicolumn{3}{c}{\textit{Optimization}} \\
    \midrule
    Optimizer & Adam & As in \cite{kim2018semi} \\
    Learning rate & $1 \times 10^{-4}$ & $1 \times 10^{-4}$ (hypernetwork), $5 \times 10^{-5}$ (decoder) \\
    Batch size & 32 & As in \cite{kim2018semi} \\
    Epochs & 1000 & As in \cite{kim2018semi} \\
    Early stopping & Patience = 100 & As in \cite{kim2018semi} \\
    Weight decay & -- & $5 \times 10^{-3}$ (OMNIGLOT), $5 \times 10^{-4}$ (MNIST) \\
    
    \midrule
    \multicolumn{3}{c}{\textit{Evaluation}} \\
    \midrule
    Variational metrics & ELBO, $D_{\mathrm{KL}}$ & ELBO \\
    Posterior accuracy & $d_{\mathrm{MAP}}$, $r_{\mathrm{MAP}}$ & -- \\
    Qualitative analysis & Posterior visualization & Reconstruction visualization \\
    Robustness & Random initializations & Random initializations \\
    Parameter efficiency & Encoder capacity scaling & -- \\
    
    \bottomrule
    \end{tabular}
}
\label{tab:hparams_all}
\end{table}

\subsection{Synthetic Data}
Following~\cite{kim2018semi}, we first evaluate our approach on a synthetic dataset where the true generative model is known. Let $\vect{z} \in \mathbb{R}^2$ denote a latent variable drawn from a standard Gaussian prior and $\vect{x} \in \mathbb{R}^3$ the corresponding observation. The generative process is defined as
\[
\vect{z} \sim \mathcal{N}(0, I_2), \qquad 
\vect{x} = f(\vect{z}) + \epsilon, \qquad 
\epsilon \sim \mathcal{N}(0, \sigma^2 I_3).
\]
The deterministic mapping $f : \mathbb{R}^2 \rightarrow \mathbb{R}^3$ is defined as
\[
f(\vect{z}) = A\vect{z} + g(\vect{z}),
\]
where $A \in \mathbb{R}^{3 \times 2}$ is a fixed full-rank matrix and $g(\vect{z})$ introduces a nonlinear interaction between the latent dimensions. Specifically, we define
\[
A =
\begin{bmatrix}
1 & 0 \\
0 & 1 \\
1 & 1
\end{bmatrix},
\qquad
g(\vect{z}) =
\begin{bmatrix}
0 \\
0 \\
z_1 z_2
\end{bmatrix}.
\]
The observation noise level is set to $\sigma = 0.1$, resulting in posterior distributions that are well concentrated around the true latent variables while retaining a non-degenerate level of uncertainty. A dataset of $5000$ samples is generated following this procedure.

In order to isolate the effect of the amortization gap, the decoder is fixed to the true generative mapping $f(\vect{z})$ used to generate the synthetic dataset. In this setting, the only source of approximation arises from the amortized inference network, since the additional complexity associated with learning the decoder is removed.

Moreover, since the goal of this experiment is to isolate and analyze the amortization gap under controlled conditions, we do not consider a held-out validation or test set. 
Instead, all evaluations are performed on the same dataset used for training. 
This choice avoids introducing additional sources of variability related to generalization, allowing us to focus exclusively on the quality of the inferred posterior.

To evaluate IA-VAE, we first train a VAE whose encoder is parameterized by multilayer perceptrons with a single hidden layer of 2 neurons and ReLU as activation functions. 
Given the simplicity of the VAE architecture, we adopt a lightweight hypernetwork for the IA-VAE, where parameter modulations for each encoder block are generated through a single linear projection. 
This projection is kept linear, following the original implementation in~\cite{ha2016hypernetworks} 
Specifically, considering $d_\ell$ as the number of elements of the $\ell$-th block and $d_\text{max}$ as the maximum number of elements across all blocks, 
the hypernetwork takes as input the datapoint $\vect{x}$ together with the embedding $\vect{e}_\ell$ of dimensions $l=2$, and is defined as
\[
h(\vect{x}, \vect{e}_\ell; \psi) = W [\vect{x}; \vect{e}_\ell] + b,
\]
where $[\vect{x}; \vect{e}_\ell]$ denotes the concatenation of the observation $\vect{x}$ and the block embedding $\vect{e}_\ell$, and $\psi = \{W, b\}$ represents the hypernetwork parameters, with $W \in \mathbb{R}^{d_\text{max} \times (d+l)}$ and $b \in \mathbb{R}^{d_\text{max}}$. 
For each block $\ell$, only the first $d_\ell$ components of this vector are used to construct the parameter modulation. Thus, the $\ell$-th parameter block is computed as
\[
\phi_\ell(\vect{x}; \psi) = \phi_{\mathrm{AVI}, \ell} + h(\vect{x}, \vect{e}_\ell;\psi)_{1:d_\ell}.
\]
All the experiments were performed using the Adam optimizer with a learning rate of $1 \times 10^{-4}$ for a maximum of $1000$ training epochs, with a batch size of $32$. 
Early stopping is applied by monitoring the ELBO, and training is terminated if no improvement is observed for $100$ consecutive epochs.
The hypernetwork parameters $W$ and $b$ are initialized with zero-mean Gaussian weights with standard deviation $10^{-3}$ and zero bias, so that the model starts close to the base encoder parameters and progressively learns instance-specific modulations.

We analyze the behavior of IA-VAE along the following four complementary aspects: 

\subsubsection{Robustness across random initializations:} 
Starting from a trained VAE, we train multiple IA-VAE models to account for the stochasticity induced by random initialization and random components of the training process. Specifically, we train 10 independent IA-VAE models initialized with different random seeds, and report the average metrics across these runs.
To also account for variability due to the initialization of the base VAE itself, the entire procedure is repeated across 10 independently trained VAE models initialized with different random seeds. 

\subsubsection{Qualitative posterior analysis:} 
To qualitatively assess the inference behavior, we visualize the posterior density for two randomly selected observations. 
In particular, we will show the true latent variable used to generate the observation, the MAP estimate, and the posterior means inferred by the VAE and the proposed IA-VAE, in order to assess whether the inferred posterior means lie in regions of high posterior probability and accurately capture the structure of the true posterior.

\subsubsection{Quantitative posterior evaluation:} 
To quantitatively assess the accuracy of the inferred latent representations, we measure their discrepancy from the MAP estimate while accounting for the local geometry of the posterior distribution. 
The MAP estimate serves as a reference point, as it identifies the most probable latent configuration under the true posterior. 
To this end, we approximate the posterior locally with a Gaussian distribution fitted to the posterior density. 
The covariance of this approximation captures the local uncertainty structure of the posterior and allows us to evaluate distances using the Mahalanobis distance~\cite{hastie2009elements}, thus providing a scale-aware measure of how far an inferred latent representation lies from the MAP relative to the posterior spread. 
This also enables the computation of density-based quantities for the inferred posterior means. 

\subsubsection{Parameter efficiency:}
To assess whether the improvements obtained by IA-VAE could be explained simply by increasing the capacity of a standard amortized encoder, we trained a family of baseline VAEs with a progressively wider hidden layer. In these models, the encoder architecture remains identical except for the size of the hidden layer, resulting in parameter counts ranging from 20 to 164 parameters.

% \subsubsection{Trainable decoder: 
% We now consider a more realistic setting in which the decoder parameters are learned jointly with the inference network rather than fixed to the ground-truth generative mapping. 
% In this case, the decoder is parameterized by a multilayer perceptron with an architecture symmetric to the encoder.  
% We therefore compare the VAE and IA-VAE in terms of ELBO and its KL component over the dataset.

\subsection{Image Data}

To evaluate the proposed approach on non-synthetic data, we conduct experiments on image datasets following the experimental setup reported in~\cite{kim2018semi}.
Unless explicitly stated otherwise, all hyperparameters and training details are identical to those reported therein.
Using this setup, we conducted experiments on OMNIGLOT~\cite{lake2015human}, MNIST~\cite{lecun2002gradient} and Fashion MNIST~\cite{xiao2017fashion} datasets. 

Following~\cite{kim2018semi}, the input images are binarized and modeled using a Bernoulli observation likelihood. 
The encoder is a 3-layer ResNet that outputs the parameters of a Gaussian variational posterior with diagonal covariance and latent dimensionality $32$. 
The decoder is a 12-layer Gated PixelCNN decoder with a Bernoulli likelihood.

In contrast to the synthetic setting, where the true generative process is known and more detailed posterior analyses are possible, evaluation on image datasets is necessarily restricted to variational quantities.
In particular, we assess model performance in terms of the ELBO and its KL divergence component, computed on held-out data.
All models are trained on the training split of each dataset, while model selection is performed on a validation set based on the ELBO.
The validation set is obtained by randomly sampling $15\%$ of the training data, and final results are reported on the standard test split provided with each dataset.

All experiments are repeated across 10 independent runs with different random seeds. 
We report mean and standard deviation of the ELBO across runs, and assess the statistical significance of the observed performance differences via hypothesis testing procedure.

For the IA-VAE, in contrast to the experiments on synthetic data where the hypernetwork reduces to a simple linear projection, we adopt a more complex hypernetwork to match the higher complexity of the encoder, while still adhering to the blockwise parameter modulation strategy.
In particular, for convolutional layers we follow~\cite{ha2016hypernetworks} exactly, using the same architecture and generating kernel weights modulations slice-by-slice along the input-channel dimension.

For fully connected layers, however, directly applying the same parameterization would lead to a substantially larger number of hypernetwork parameters due to the higher dimensionality of dense weight matrices. 
To address this issue, we instead adopt a more parameter-efficient column-wise modulation scheme.
% \roberto{gli output della hypernetwork? questa parte andrebbe meglio legata a quanto si diceva in "Blockwise Parameter Generation", facendo ad esempio un richiamo, ed al più sintetizzare il resto, visto che già si era parlato di questo approccio per rendere il tutto più efficiente} 
In particular, denoting by $\Delta W_\ell$ the weight modulation matrix associated with block $\ell$, we generate its columns individually. 
For each input dimension $j = 1, \dots, d_{\text{in}}$, the $j$-th column is given by
\[
\Delta W_{\ell,:,j} = \left( W_{\text{out}} \,[\vect{x}; \vect{e}_\ell; \vect{e}^{\text{in}}_j] + \vect{b}_{\text{out}} \right)_{1:d_{\text{out}}},
\]
where $\vect{e}^{\text{in}}_j \in \mathbb{R}^{n}$ is a learnable embedding associated with the $j$-th input dimension, 
$W_{\text{out}} \in \mathbb{R}^{d_{\text{out}}^{\max} \times (m + l + n)}$ and $\vect{b}_{\text{out}} \in \mathbb{R}^{d_{\text{out}}^{\max}}$, 
$d_{\text{out}}^{\max}$ denotes the maximum output dimensionality across all fully-connected blocks, and $\Delta W_{\ell,:,j}$ denotes the $j$-th column of the weight modulation matrix $\Delta W_\ell$.

In these experiments, the dimensionality of $\vect{e}_\ell$ is fixed to $16$, while that of $\vect{e}^{\text{in}}$ is fixed to $8$. 
Differently from~\cite{kim2018semi}, we set the learning rate to $1 \times 10^{-4}$. 
We found that using a smaller learning rate for the decoder improves training stability; therefore, we use a learning rate of $5 \times 10^{-5}$ for the decoder while keeping the same value for the remaining parameters.
We additionally employ weight decay with coefficient $5 \times 10^{-3}$ for the OMNIGLOT dataset, and $5 \times 10^{-4}$ for the MNIST dataset, which we found to improve training stability. 
Batch normalization layers in the encoder are kept fixed during IA-VAE training to prevent their running statistics from being affected by instance-specific parameter modulations. 
Since encoder parameters vary across observations, updating these statistics would mix activations from different effective networks, leading to unstable estimates. Keeping them fixed ensures consistent normalization and stabilizes training.
The hypernetwork output layers are initialized with zero-mean Gaussian weights with standard deviation $10^{-3}$ and zero bias, so that the model starts close to the base encoder parameters and progressively learns instance-specific modulations.

These values were determined through preliminary analyses on the considered datasets.

\section{Results}
\label{sec:results}
We report the empirical results for IA-VAE in both the synthetic and image settings introduced in Section~\ref{sec:experiments}. 
We first consider the synthetic dataset, where the true generative process is known and a detailed analysis of posterior inference is possible. 
We then evaluate the proposed approach on standard image benchmarks, where performance is assessed through variational metrics on held-out data.

\subsection{Synthetic Data}
In this subsection, we report the results on the synthetic dataset introduced in Section~\ref{sec:experiments}, where the true generative process is known and the amortization gap can be directly analyzed.
We evaluate IA-VAE along the criteria defined above, including robustness to random initialization, qualitative and quantitative accuracy of the inferred posterior, and parameter efficiency. 

\begin{enumerate}
    \item \textit{Robustness Across Random Initializations.} 
    % \begin{table}[t]
    % \centering
    % \caption{Robustness to random initialization on the synthetic dataset.
    % Each row corresponds to a base VAE trained with a different random initialization. The baseline column reports the ELBO of the base model, with the corresponding KL divergence shown in parentheses.
    % For each base model, 10 IA-VAE instances are trained with different random seeds. The table reports the mean and standard deviation of their ELBO and KL divergence values.
    % Higher ELBO indicates better performance. Comparisons should be made row-wise between the baseline and IA-VAE columns.}
    % \scalebox{.9}{
    %     \begin{tabular}{lccc}
    %     \toprule
    %      & Baseline & IA-VAE (ELBO) & IA-VAE (KL) \\
    %     \midrule
    %     VAE$_1$ 
    %     & $-7.95\;(5.47)$ & $-6.43 \pm 0.08$ & $5.55 \pm 0.01$\\
    %     VAE$_2$ 
    %     & $-7.86\;(5.48)$ & $-6.42 \pm 0.04$ & $5.54 \pm 0.01$\\
    %     VAE$_3$ 
    %     & $-7.91\;(5.48)$ & $-6.43 \pm 0.03$ & $5.53 \pm 0.01$\\
    %     VAE$_4$ 
    %     & $-7.83\;(5.48)$ & $-6.42 \pm 0.04$ & $5.61 \pm 0.02$\\
    %     VAE$_5$ 
    %     & $-7.84\;(5.46)$ & $-6.42 \pm 0.02$ & $5.58 \pm 0.02$\\
    %     VAE$_6$ 
    %     & $-8.60\;(5.50)$ & $-6.45 \pm 0.06$ & $5.54 \pm 0.04$\\
    %     VAE$_7$ 
    %     & $-7.94\;(5.48)$ & $-6.39 \pm 0.05$ & $5.53 \pm 0.02$\\
    %     VAE$_8$ 
    %     & $-7.87\;(5.48)$ & $-6.43 \pm 0.05$ & $5.53 \pm 0.01$\\
    %     VAE$_9$ 
    %     & $-7.95\;(5.48)$ & $-6.39 \pm 0.05$ & $5.54 \pm 0.01$\\
    %     VAE$_{10}$ 
    %     & $-7.84\;(5.49)$ & $-6.35 \pm 0.03$ & $5.56 \pm 0.01$\\
    %     \bottomrule
    %     \end{tabular}
    % }
    % \label{tab:robustness_decoder_oracle}
    % \end{table}
    \begin{table}[t]
    \centering
    \caption{Robustness to random initialization on the synthetic dataset.
    Each row corresponds to a base VAE trained with a different random initialization. 
    For each VAE, 10 IA-VAE instances are trained using different random seeds. 
    The table reports ELBO values (with KL divergence in parentheses), and for IA-VAE we report mean and standard deviation across runs. 
    Comparisons should be made row-wise between VAE and IA-VAE.
    }
    \scalebox{.9}{
        \begin{tabular}{cc}
        \toprule
        VAE & IA-VAE \\
        \midrule
        $-7.95\;(5.47)$ & $\mathbf{-6.43 \pm 0.08 \; (5.55 \pm 0.01)}$\\
        $-7.86\;(5.48)$ & $\mathbf{-6.42 \pm 0.04 \; (5.54 \pm 0.01)}$\\
        $-7.91\;(5.48)$ & $\mathbf{-6.43 \pm 0.03 \; (5.53 \pm 0.01)}$\\
        $-7.83\;(5.48)$ & $\mathbf{-6.42 \pm 0.04 \; (5.61 \pm 0.02)}$\\
        $-7.84\;(5.46)$ & $\mathbf{-6.42 \pm 0.02 \; (5.58 \pm 0.02)}$\\
        $-8.60\;(5.50)$ & $\mathbf{-6.45 \pm 0.06 \; (5.54 \pm 0.04)}$\\
        $-7.94\;(5.48)$ & $\mathbf{-6.39 \pm 0.05 \; (5.53 \pm 0.02)}$\\
        $-7.87\;(5.48)$ & $\mathbf{-6.43 \pm 0.05 \; (5.53 \pm 0.01)}$\\
        $-7.95\;(5.48)$ & $\mathbf{-6.39 \pm 0.05 \; (5.54 \pm 0.01)}$\\
        $-7.84\;(5.49)$ & $\mathbf{-6.35 \pm 0.03 \; (5.56 \pm 0.01)}$\\
        \bottomrule
        \end{tabular}
    }
    \label{tab:robustness_decoder_oracle}
    \end{table}
    
    The results reported in Table~\ref{tab:robustness_decoder_oracle} show that IA-VAE consistently improves the ELBO with respect to the corresponding baseline VAE across all model initializations. While the baseline ELBO varies across different VAE trainings, the IA-VAE models systematically achieve substantially higher values.
    Furthermore, the variability across the 10 IA-VAE runs associated with each base model is very small, as indicated by the low standard deviations reported for each mean. This suggests that the improvements obtained by IA-VAE are robust with respect to the random initialization of the hypernetwork and other stochastic elements of the training procedure.
    The KL divergence values reported for IA-VAE are also highly consistent across seeds and are slightly higher than those of the baseline VAE.
    This suggests that the improvement in ELBO is not due to a weaker regularization effect, but rather to a more effective use of the latent space and a tighter approximation of the posterior.

    \item \textit{Qualitative Posterior Analysis.}
    
    Results are shown in Figure~\ref{fig:posterior_landscape}. We remind the reader that, the plots show the true latent variable $\vect{z}_{true}$ used to generate the observation, the MAP estimate $\vect{z}_{\mathrm{MAP}}$, and the posterior means inferred by the VAE ($\boldsymbol{\mu}_{\mathrm{VAE}}$) and the proposed IA-VAE ($\boldsymbol{\mu}_{\mathrm{IA\!-\!VAE}}$).
    
    \begin{figure}[t]
    \centering
    \begin{subfigure}{0.45\linewidth}
        \centering
        \includegraphics[width=\linewidth]{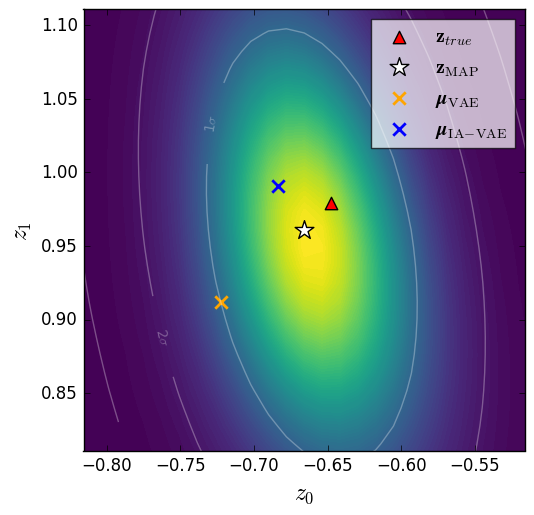}
        % \caption{}
    \end{subfigure}
    \hfill
    \begin{subfigure}{0.45\linewidth}
        \centering
        \includegraphics[width=\linewidth]{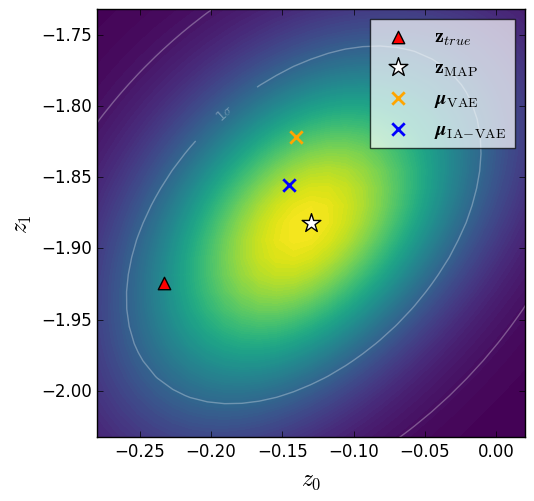}
        % \caption{}
    \end{subfigure}
    \caption{Posterior density in the latent space for two randomly selected observations from the synthetic dataset. For each observation, the posterior is evaluated on a dense grid over the two-dimensional latent space, and the resulting density is shown as a heatmap; brighter regions correspond to higher posterior density. 
    The red triangle denotes the true latent variable $\vect{z}_{true}$ used to generate the observation, the white star denotes the MAP estimate $\vect{z}_{\mathrm{MAP}}$, and the orange and blue crosses denote the posterior means inferred by the VAE ($\boldsymbol{\mu}_{\mathrm{VAE}}$) and the proposed IA-VAE ($\boldsymbol{\mu}_{\mathrm{IA-VAE}}$) model, respectively. 
    The overlaid contours correspond to level sets of a Gaussian approximation of the posterior.
    The MAP estimate $\vect{z}_{\mathrm{MAP}}$ serves as a reference point, as it identifies the most probable latent configuration under the true posterior.}
    \label{fig:posterior_landscape}
    \end{figure}
    
    The qualitative comparison highlights a clear difference between the two inference models. 
    In both examples, the posterior mean inferred by IA-VAE lies closer to the high-density region of the posterior and is better aligned with the MAP estimate. 
    In contrast, the posterior mean produced by the standard VAE tends to be displaced from the posterior mode and falls in regions of lower posterior density.
    This behavior reflects the limited flexibility of amortized inference, where a single set of parameters must generalize across all datapoints, potentially leading to suboptimal approximations for individual observations.
    In contrast, IA-VAE generates instance-specific parameter modulations, enabling the inference network to adapt to the local structure of the posterior.

    \item \textit{Quantitative Posterior Evaluation.}
    \begin{table}[t]
    \centering
    \caption{Quantitative comparison of inference accuracy on the synthetic dataset in the oracle setting. 
    The table reports the ELBO (with the corresponding KL divergence in parentheses), the Mahalanobis distance from the MAP estimate $d_{\mathrm{MAP}}$, and the posterior density ratio $r_{\mathrm{MAP}} = p(\boldsymbol{\mu}|\vect{x}))/p(\vect{z}_{\mathrm{MAP}}|\vect{x}))$, where $\mu$ denotes the inferred posterior mean and $\vect{z}_{\mathrm{MAP}}$ the MAP estimate of the posterior. 
    Arrows indicate the preferred direction for each metric.}
    \begin{tabular}{lcccc}
    \toprule
    Method & $\mathrm{ELBO}\,\uparrow$ & $d_{\mathrm{MAP}}\,\downarrow$ & $r_{\mathrm{MAP}}\,\uparrow$ \\
    \midrule
    VAE & -7.83 (5.48) & 1.44 & 0.53 \\
    IA-VAE & \textbf{-6.36 (5.55)} & \textbf{0.56} & \textbf{0.84} \\
    \bottomrule
    \end{tabular}
    \label{tab:toy_results_oracle}
    \end{table}
    
    From Table~\ref{tab:toy_results_oracle}, we can observe that the IA-VAE achieves a higher ELBO than the standard VAE, while exhibiting a comparable KL divergence, suggesting that the improvement primarily arises from a tighter variational approximation rather than a different level of regularization with respect to the prior.
    The second column reports the Mahalanobis distance from the MAP estimate, $d_{\mathrm{MAP}}$. The IA-VAE significantly reduces this distance compared to the VAE, indicating that the inferred latent representations are located much closer to the high-density region of the posterior density.
    Finally, the third column reports the posterior density ratio $r_{\mathrm{MAP}} = p(\boldsymbol{\mu}|\vect{x})/p(\vect{z}_{\mathrm{MAP}}|\vect{x})$, which measures the posterior density at the inferred mean relative to the density at the MAP. Values close to one indicate that the inferred mean lies in a region of high posterior probability. The IA-VAE attains a substantially higher density ratio than the VAE, confirming that its inferred latent representations concentrate in regions of the latent space with significantly higher posterior probability.

    \item \textit{Parameter Efficiency.}
    In the results shown above, IA-VAE achieves an average ELBO of approximately $-6.40$ using about $68$ parameters in total for the inference mechanism (including both the base encoder and the hypernetwork). By contrast, a standard VAE with a comparable parameter number ($68$ parameters) achieves an ELBO of $-6.84$. A comparable ELBO is only obtained when the size of the standard encoder is increased to approximately $116$ parameters.
    This corresponds to approximately a factor of $1.7$ increase in the number of parameters compared to IA-VAE. Equivalently, IA-VAE attains similar performance while using about $40\%$ fewer parameters. These results suggest that the improvements achieved by IA-VAE cannot be attributed solely to a larger parameter budget. Instead, they indicate that the instance-specific parameter modulations generated by the hypernetwork allow the inference model to use its capacity more efficiently than a standard amortized encoder with globally shared parameters.
    
    \begin{figure}
        \centering
        \includegraphics[width=0.45\linewidth]{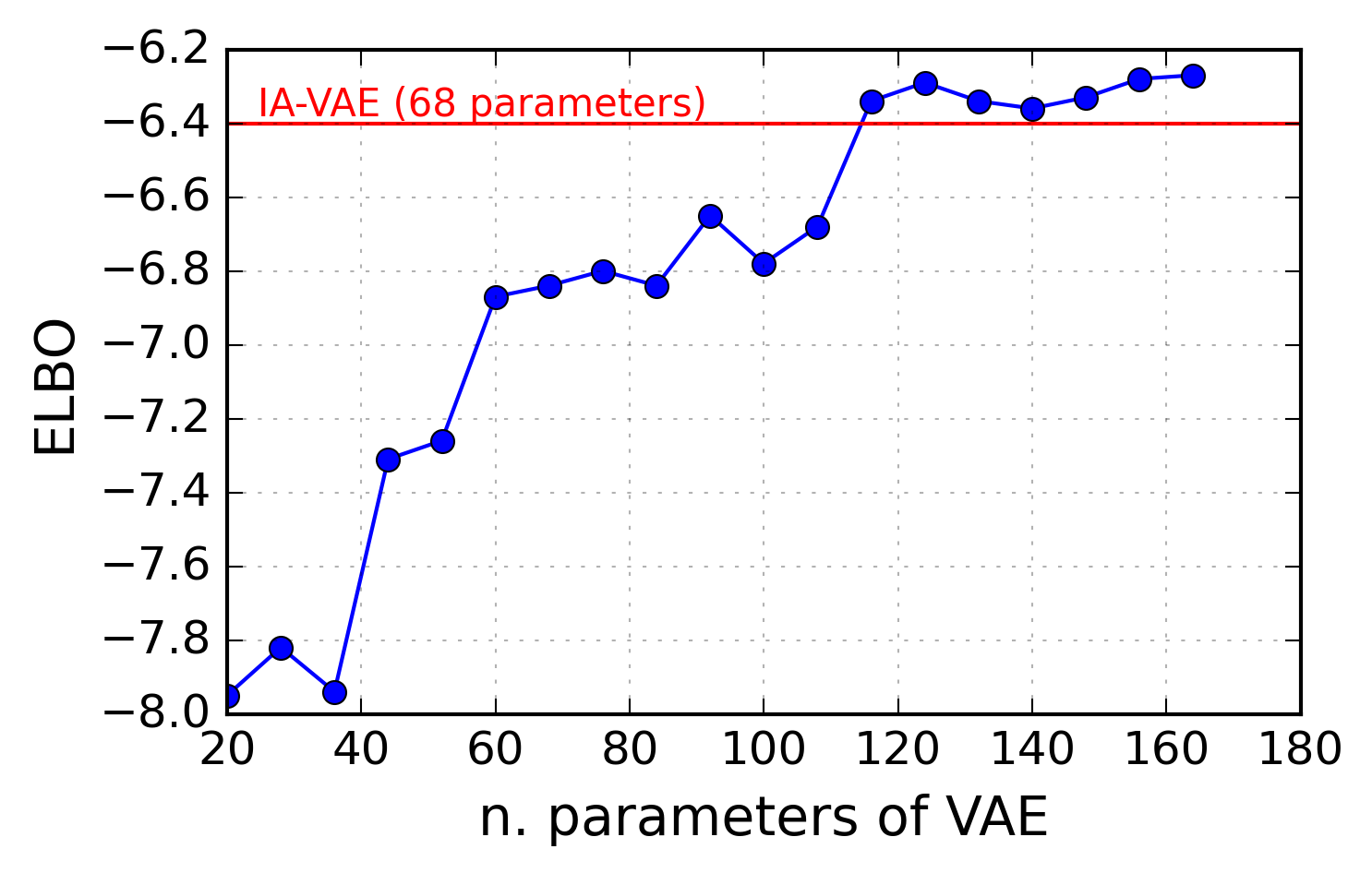}
        \caption{Parameter efficiency comparison between IA-VAE and standard VAEs with increasing encoder capacity in the oracle setting.
        The curve shows the ELBO achieved by baseline VAEs as a function of the number of encoder parameters obtained by progressively increasing the hidden layer size.
        The horizontal red line indicates the ELBO achieved by IA-VAE using a total of 68 parameters for the inference mechanism.}
        \label{fig:placeholder}
    \end{figure}

\end{enumerate}

\subsection{Image Data}
In this subsection, we report the results on the image datasets introduced in Section~\ref{sec:experiments}, which provide a realistic high-dimensional setting where inference quality is assessed through ELBO.

% \begin{table}[h]
% \centering
% \caption{Test ELBO on OMNIGLOT, MNIST, and Fashion MNIST for both VAE and IA-VAE. Results are reported as mean and standard deviation over 10 random seeds. Values in parentheses indicate the corresponding KL divergence term, also reported as mean and standard deviation.}
% \label{tab:results_images_benchmark}
% \vspace{2mm}
% \begin{tabular}{l|cc}
% \toprule
% Dataset & VAE & IA-VAE \\
% \midrule
% OMNIGLOT 
% & $-90.30 \pm 0.03 \;\; (1.01 \pm 0.10)$ 
% & $\mathbf{-89.73 \pm 0.01 \;\; (1.30 \pm 0.05)}$ \\

% MNIST 
% & $-79.42 \pm 0.03 \;\; (4.24 \pm 0.13)$ 
% & $\mathbf{-78.68 \pm 0.01 \;\; (4.31 \pm 0.02)}$ \\

% Fashion MNIST 
% & $-225.87 \pm 0.07 \;\; (4.50 \pm 0.08)$ 
% & $\mathbf{-224.79 \pm 0.01 \;\; (4.91 \pm 0.02)}$ \\

% \bottomrule
% \end{tabular}
% \end{table}

\begin{table}[h]
\centering
\caption{Test ELBO on OMNIGLOT, MNIST, and Fashion MNIST for both VAE and IA-VAE. Results are reported as mean and standard deviation over 10 random seeds. The last column reports the $p$-value of a one-sided paired hypothesis test ($\alpha = 0.05$). Entries marked with $^{*}$ indicate statistically significant improvements of IA-VAE over VAE.}
\label{tab:results_images_benchmark}
\vspace{2mm}
\begin{tabular}{l|cc|c}
\toprule
Dataset & VAE & IA-VAE & $p$-value (ELBO) \\
\midrule
OMNIGLOT 
& $-90.30 \pm 0.03$ 
& $\mathbf{-89.73 \pm 0.01}$ 
& $< 0.01^{*}$ \\

MNIST 
& $-79.42 \pm 0.03$ 
& $\mathbf{-78.68 \pm 0.01}$ 
& $< 0.01^{*}$ \\

Fashion MNIST 
& $-225.87 \pm 0.07$ 
& $\mathbf{-224.79 \pm 0.01}$ 
& $< 0.01^{*}$ \\

\bottomrule
\end{tabular}
\end{table}

Table~\ref{tab:results_images_benchmark} reports the test ELBO on the considered datasets. 
IA-VAE consistently outperforms the standard VAE across all benchmarks, yielding higher ELBO values on OMNIGLOT, MNIST, and Fashion-MNIST.
These results indicate that IA-VAE achieves a better trade-off between reconstruction accuracy and regularization, leading to a tighter variational bound.
From an inference perspective, these results indicate that instance-specific parameter modulations enable a tighter approximation of the true posterior, mitigating the limitations of amortized inference. 
By adapting the encoder parameters to each input, IA-VAE is able to better match the local structure of the posterior, leading to consistent improvements across all datasets.
Reconstruction examples obtained with IA-VAE are shown in Figure~\ref{fig:dataset_reconstruction}.

\begin{figure}
    \centering
    \includegraphics[width=0.9\linewidth]{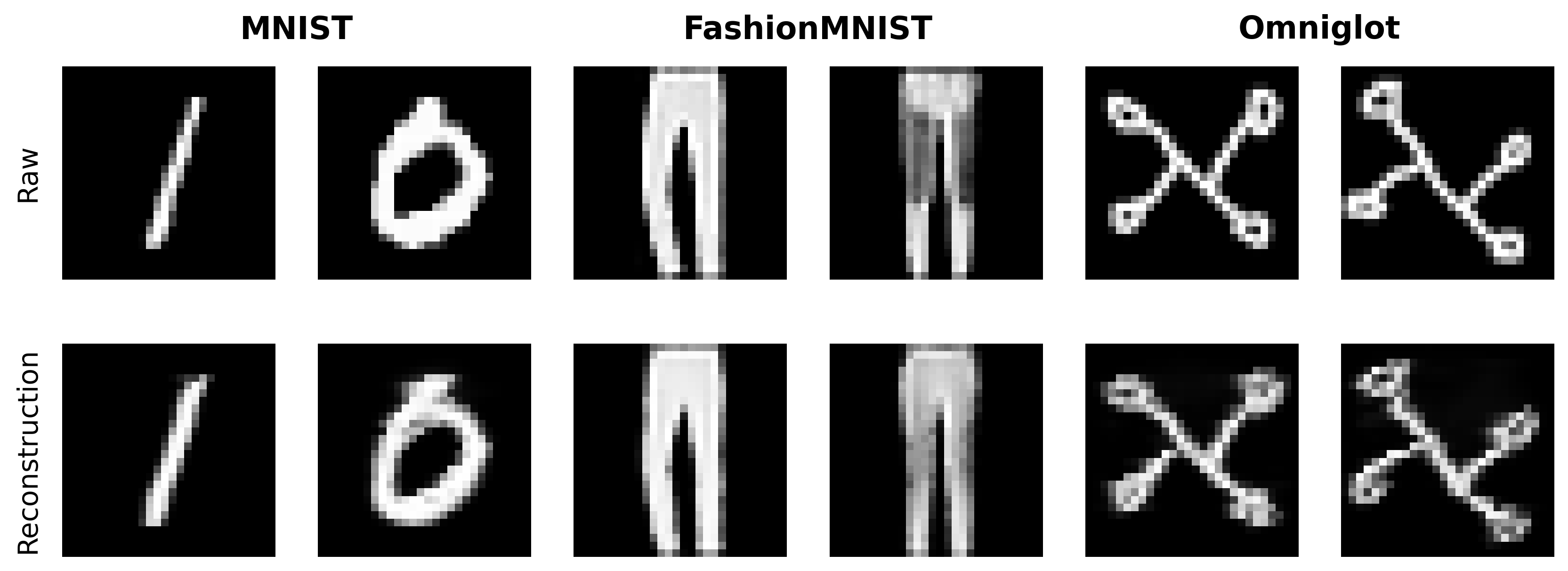}
    \caption{Reconstruction examples on MNIST, Fashion-MNIST, and OMNIGLOT. For each dataset, the top row shows the original inputs and the bottom row the corresponding reconstructions obtained using IA-VAE.}
    \label{fig:dataset_reconstruction}
\end{figure}

Additionally, we performed hypothesis testing to assess whether the observed differences between VAE and IA-VAE are statistically significant.
Because each IA-VAE model is trained starting from the corresponding VAE obtained with the same random seed, comparisons are naturally conducted using paired tests. 
The null and alternative hypotheses are defined as
\[
H_0: \mu_{\mathrm{VAE}} = \mu_{\mathrm{IA\mbox{-}VAE}},
\qquad
H_1: \mu_{\mathrm{VAE}} < \mu_{\mathrm{IA\mbox{-}VAE}},
\]
where the alternative reflects the expectation that IA-VAE attains higher ELBO values than the standard VAE.
For each dataset and for each metric, we first compute the paired differences across seeds. 
We then assess whether these differences are compatible with normality using the Shapiro-Wilk test~\cite{hastie2009elements}. 
If normality is not rejected at significance level $\alpha = 0.05$, we apply a one-sided paired Student's $t$-test~\cite{hastie2009elements}. 
Otherwise, we use a one-sided Wilcoxon signed-rank test~\cite{hastie2009elements}, which does not rely on the normality assumption.

Results are reported in Table~\ref{tab:results_images_benchmark}. 
The improvement in ELBO achieved by IA-VAE is statistically significant across all considered datasets according to one-sided paired tests, providing consistent evidence that the proposed method yields a better variational fit to the data compared to the standard VAE.
Importantly, the gains are observed uniformly across datasets and exhibit low variability across random seeds, suggesting that the benefits of instance-adaptive inference are both robust and reproducible in high-dimensional settings.

\section{Conclusions}
\label{sec:conclusions}

In this work, we introduced IA-VAE, an instance-adaptive parametrization of amortized variational inference within the VAE framework, in which a hypernetwork generates input-dependent modulations of a shared inference model. This approach relaxes the structural constraint imposed by globally shared encoder parameters, while preserving the computational efficiency of amortized inference.

From a theoretical perspective, we showed that the variational family induced by IA-VAE contains that of standard amortized inference under mild conditions, ensuring that the proposed method preserves all solutions achievable by the base encoder and cannot yield a worse optimal ELBO.

Empirically, we evaluated IA-VAE on both synthetic and real-world datasets. In the synthetic setting, where the true posterior is known, IA-VAE produces more accurate posterior approximations and reduces the amortization gap. On standard image benchmarks, it consistently improves held-out ELBO with statistically significant gains across multiple runs. These improvements are obtained without introducing iterative refinement or additional optimization, maintaining the scalability advantages of amortized inference.

Overall, these results support the view that instance-specific parameter modulation provides a principled and effective mechanism to mitigate the limitations of standard amortized inference. By enabling input-dependent adaptation of the inference model, IA-VAE yields more accurate and robust variational approximations.

The proposed framework also opens several directions for future work. In particular, input-dependent parameter generation naturally connects to continual learning settings~\cite{wang2024comprehensive,von2019continual}, where models must efficiently adjust to evolving data distributions. 
Furthermore, investigating the impact of instance-adaptive inference on representation learning remains a compelling direction; specifically, we aim to explore how IA-VAE can enhance feature disentanglement \cite{locatello2020weakly,togo2024concvae}.
By enabling the encoder to better capture factorized and semantically meaningful representations, this approach could significantly improve interpretability within the explainable artificial intelligence context \cite{Apicella20221,apicella2023strategies}, as it explicitly exposes how the inference process adapts and isolates key features across different observations.
% Exploring these aspects may further clarify the role of instance-adaptive inference in enhancing both interpretability and representation quality in deep generative models.

%
\section*{Contribute statement}
\textbf{Conceptualization:} A. Pollastro, R. Prevete;
\textbf{Methodology:} A. Pollastro;
\textbf{Software and Experiments:} A. Pollastro;
\textbf{Investigation:} A. Pollastro;
\textbf{Formal Analysis:} A. Pollastro, F. Isgrò;
\textbf{Writing:} A. Pollastro, A. Apicella, R. Prevete;
\textbf{Supervision:} R. Prevete, F. Isgrò.

\section*{Acknowledgements}
\label{sec:Acknowledgements}
This work was funded by the PNRR MUR project PE0000013-FAIR (CUP: E63C25000630006).

\bibliographystyle{unsrt}
\bibliography{bibliography}
\end{document}